\newcommand{\ourmethod}{FINGER}
\DeclareMathOperator*{\argmin}{arg\,min}
\title{FINGER: Fast Inference for Graph-based Approximate Nearest Neighbor Search}
\author{%
  Patrick H. Chen\\
  UCLA\\
  Los Angels, CA \\
  \texttt{patrickchen@g.ucla.edu} \\
  % examples of more authors
   \And
   Wei-cheng, Chang \\
   Amazon \\
   Palo Alto, CA \\
   \texttt{weicheng.cmu@gmail.com} \\
    \And
   Hsian-fu, Yu \\
   Amazon \\
   Palo Alto, CA \\
   \texttt{rofu.yu@gmail.com} \\
   \And
   Inderjit S. Dhillon \\
   UT Austin $\&$ Amazon \\
   Palo Alto, CA \\
   \texttt{inderjit@cs.utexas.edu} \\
   \And
   Cho-jui, Hsieh \\
   UCLA $\&$ Amazon \\
  Los Angels, CA \\
   \texttt{chohsieh@cs.ucla.edu} \\
  % examples of more authors
  % \And
  % Coauthor \\
  % Affiliation \\
  % Address \\
  % \texttt{email} \\
  % \AND
  % Coauthor \\
  % Affiliation \\
  % Address \\
  % \texttt{email} \\
  % \And
  % Coauthor \\
  % Affiliation \\
  % Address \\
  % \texttt{email} \\
  % \And
  % Coauthor \\
  % Affiliation \\
  % Address \\
  % \texttt{email} \\
}
\begin{document}

\theoremstyle{plain}
\newtheorem{theorem}{Theorem}[section]
\newtheorem{proposition}[theorem]{Proposition}
\newtheorem{lemma}[theorem]{Lemma}
\newtheorem{corollary}[theorem]{Corollary}
\theoremstyle{definition}
\newtheorem{definition}[theorem]{Definition}
\newtheorem{assumption}[theorem]{Assumption}
\theoremstyle{remark}
\newtheorem{remark}[theorem]{Remark}

\maketitle

\begin{abstract}
Approximate K-Nearest Neighbor Search (AKNNS) has now become ubiquitous in modern applications, for example, as a fast search procedure with two tower deep learning models. Graph-based methods for AKNNS in particular have received great attention due to their superior performance. These methods rely on greedy graph search to traverse the data points as embedding vectors in a database. Under this greedy search scheme, we make a key observation: many distance computations do not influence search updates so these computations can be approximated without hurting performance. As a result, we propose \ourmethod{}, a fast inference method to achieve efficient graph search. \ourmethod{} approximates the distance function by estimating angles between neighboring residual vectors with low-rank bases and distribution matching. The approximated distance can be used to bypass unnecessary computations, which leads to faster searches. Empirically, accelerating a popular graph-based method named HNSW by \ourmethod{} is shown to outperform existing graph-based methods by 20$\%$-60$\%$ across different benchmark datasets. 
\end{abstract}

\section{Introduction}

$K$-Nearest Neighbor Search (KNNS) is a fundamental problem in machine learning \cite{bishop2006pattern}, and is used in various applications in computer vision, natural language processing and data mining \cite{chen2018learning,plotz2018neural,matsui2018survey}. Further, most of the neural embedding-based retrieval and recommendation algorithms require KNNS in the inference phase to find items that are nearest to a given query~\cite{zhang2019deep}. Formally, consider a dataset $D$ with $n$ data points $\{d_{1},d_{2},...,d_{n}\}$ where each data point has $m$-dimensional features. Given a query $q \in \mathbb{R}^{m}$, KNNS algorithms return the $K$ closest points in $D$ under a certain distance measure (e.g., $L2$ distance $\|\cdot\|_2$). Despite its simplicity, the cost of finding exact nearest neighbors is linear in the size of a dataset, which can be prohibitive for massive datasets in real time applications. It is almost impossible to obtain exact $K$-nearest neighbors without a linear scan of the whole dataset due to a well-known phenomenon called curse of dimensionality \cite{indyk1998approximate}. Thus, in practice, an exact KNNS becomes time-consuming or even infeasible for large-scale data. To overcome this problem, researchers resort to Approximate $K$-Nearest Neighbor Search (AKNNS). An AKNNS method proposes a set of $K$ candidate neighbors $T = \{t_{1},\cdots,t_{K}\}$ to approximate the exact answer. Performance of AKNNS is usually measured by recall@$K$ defined as $\frac{|T \cap A|}{K}$, where $A$ is the set of ground-truth $K$-nearest neighbors of $q$ in the dataset $D$. % {\color{red}(cho: here we can just say maximize recall; define recall@$K$ when we really need it (experiments?))} {\color{blue} but we immediately need it in the below interpretation of Figure 1. We must introduce it here, next next paragraph or in caption of Figure 1. Or assume readers know it.  } 
Most AKNNS methods try to minimize the search time by leveraging pre-computed data structures while maintaining high recall \cite{jayaram2019diskann}. There is a large body of AKNNS literature \cite{cai2019revisit,wang2020note,matsui2018survey}; most of the efficient AKNNS methods can be categorized into three categories: quantization methods, space partitioning methods and graph-based methods. In particular, graph-based methods receive extensive attention from researchers due to their competitive performance. Many papers have reported that graph-based methods are among the most competitive AKNNS methods on various benchmark datasets~\cite{cai2019revisit,wang2020note,aumuller2020ann,Fu2021-mj}.

 \begin{figure*}[t]
  \centering
%   \begin{tabular}{ccc}
%     %   \subfigure[]{
%     \includegraphics[width=0.3\linewidth]{}
%     % }
%     &
%     % \subfigure[]{
%     \includegraphics[width=0.3\linewidth]{}
%     % }
%     &
%     % \subfigure[]{
%     \includegraphics[width=0.3\linewidth]{}
%     % } 
%   \end{tabular}
\includegraphics[width=.9\linewidth]{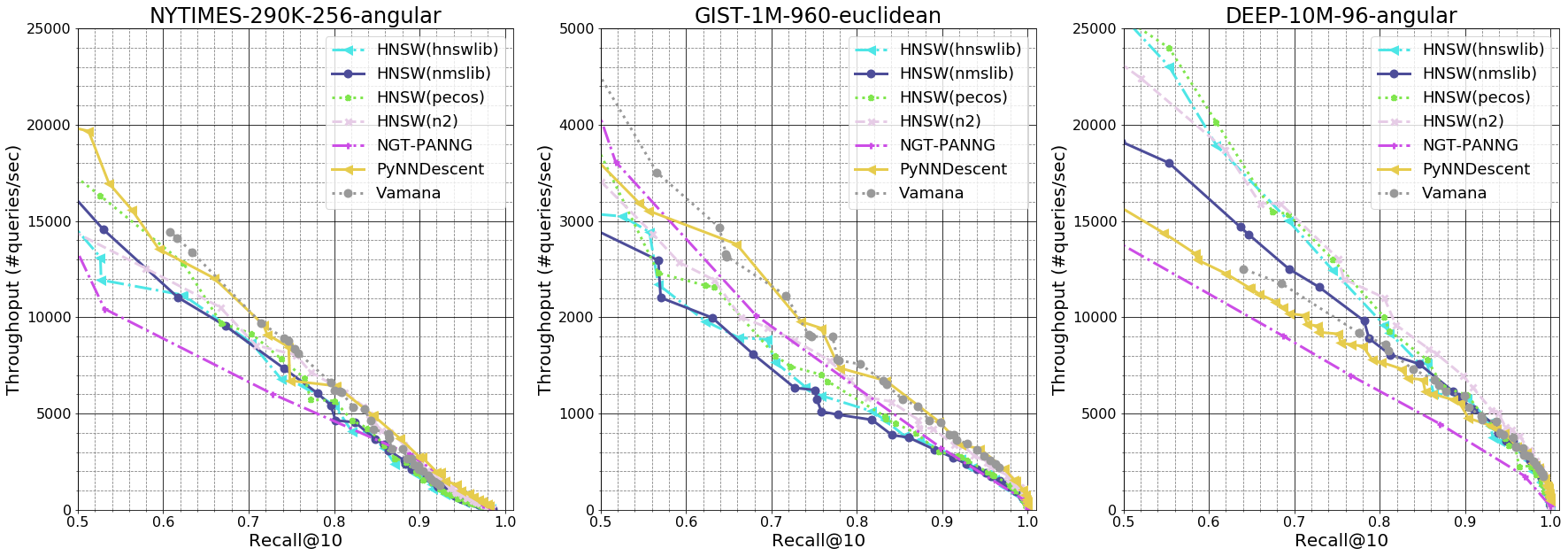}
  \vspace{-1em}
  \caption{ Comparison of state-of-the-art graph-based libraries on three benchmark datasets. Throughput versus recall@10 curve is used as the metric, where a larger area under the curve corresponds to a better method. We can observe no single method outperforms the rest on all datasets. }
  \label{fig::graph_result}
\end{figure*}

Graph-based methods work by constructing an underlying search graph where each node corresponds to a data point in $D$. Given a query $q$ and a current search node $c$, at each step, an algorithm will only calculate distances between $q$ and all neighboring nodes of $c$. Once the local search of $c$ is completed, the current search node will be replaced with an unexplored node whose distance is the closest to $q$ among all unexplored nodes. Thus, neighboring edge selection of a data point plays an important role in graph-based methods as it controls the complexity of the search space. Consequently, most recent research is focused on how to construct different search graphs or design heuristics to prune edges in a graph to achieve efficient searches \cite{jayaram2019diskann,Fu2021-mj,sugawara2016approximately,malkov2018efficient}. Despite different methods having their own advantages, there is no clear winner among these graph construction approaches on all datasets. Following a recent systematic evaluation protocol \cite{aumuller2020ann}, we evaluate performance by comparing throughput versus recall@10 curves, where a larger area under the curve corresponds to a better method. As shown in Figure \ref{fig::graph_result}, many graph-based methods achieve similar performance on three benchmark datasets. A method (e.g., PyNNDescent \cite{dong2011efficient}) can be competitive on a dataset (e.g., GIST-1M-960) while another method (e.g., HNSW \cite{malkov2018efficient}) performs better on the other dataset (e.g., DEEP-10M-96). These results suggest there might not be a single graph construction method that works best, which motivates us to consider the research question: \emph{Other than improving an underlying search graph, is there any other strategy to improve search efficiency of all graph-based methods?}.

 In this paper, instead of proposing yet another graph construction method, we show that for a given graph, part of the computations in the inference phase can be substantially reduced. 
 % we provide a positive answer to the above question by showing that part of computations in the graph-based method can be reduced to accelerate existing graph-based methods.
 Specifically, we observe that after a few node updates, most of the  distance computations will not influence the search update. This suggests the complexity of distance calculation during an intermediate stage can be reduced without hurting performance. Based on this observation, we propose \ourmethod{}, \textbf{F}ast \textbf{IN}ference for \textbf{G}raph-based approximated nearest neighbor s\textbf{E}a\textbf{R}ch, which reduces computational cost in a graph search while maintaining high recall. Our contribution are summarized as follows:
\begin{itemize}[leftmargin=*]
    \item We provide an empirical observation that most of the distance computations in the prevalent best-first-search graph search scheme do not affect final search results. Thus, we can reduce the computational complexity of many distance functions.
    \item Leveraging this characteristic, we propose an approximated distance based on modeling angles between neighboring vectors using low-rank bases. In addition, angles of neighboring vectors in a graph tend to be distributed as a Gaussian distribution, and we propose a distribution matching scheme to achieve a better distance approximation. 
    \item We provide an open source efficient C++ implementation of the proposed algorithm \ourmethod{} on the popular HNSW graph-based method. HNSW-\ourmethod{} outperforms many popular graph-based AKNNS algorithms in wall-clock time across various benchmark datasets by 20$\%$-60$\%$. 
\end{itemize}

\vspace{-.3cm}
\section{Related Work}
\vspace{-.2cm}

There are three major directions in developing efficient approximate K-Nearest-Neighbours Search (AKNNS) methods. The first direction is still to traverse all elements in a database but reduce the complexity of each distance calculation; quantization methods represent this direction. The second direction is to partition search space into regions and only search data points falling into matched regions. This includes tree-based methods \cite{silpa2008optimised} and hashing-based methods \cite{charikar2002similarity}. The third direction is graph-based methods which construct a search graph and convert the search into a graph traversal. 

% Second direction is to reduce number of queries required to perform in a search by space partitioning. Tree-based methods and hashing-based methods represent this direction. 

{\bf Quantization Methods} compress data points and represent them as short codes. Compressed representations consume less storage and thus achieve more efficient memory bandwidth usage \cite{guo2020accelerating}. In addition, the  complexity of distance computations can be reduced by computing approximate
distances with the pre-computed lookup tables. Quantization can be done by random projections \cite{li2019random}, or learned by exploiting structure in the data distribution \cite{marcheret2009optimal,morozov2019unsupervised}. In particular, the seminal Product Quantization method \cite{jegou2010product} separates data feature space into different parts and constructs a quantization codebook for each chunk. Product Quantization has become the cornerstone for most recent quantization methods \cite{guo2020accelerating,martinez2018lsq++,wu2017multiscale,Douze2018-kk}. There is also work focusing on learning transformations in accordance with product quantization \cite{ge2013optimized}. Most recent quantization methods achieve competitive results on various benchmarks \cite{guo2020accelerating,JDH17}.

{\bf Space Partition Methods} includes hashing-based and tree-based methods. Hashing-based Methods generate low-bit codes for high dimensional data and try to preserve the similarity among the original distance measure. Locality sensitive hashing~\cite{gionis1999similarity} is a representative framework  that enables users to design a set of hashing functions. Some data-dependent hashing functions have also been designed \cite{wang2010sequential,he2013k}. Nevertheless, a recent review \cite{cai2019revisit} reported the simplest random-projection hashing \cite{charikar2002similarity} actually achieves the best performance. According to this review, the advantage of hashing-based methods is simplicity and low memory usage; however, they are significantly outperformed by graph-based methods. Tree-based Methods learn a recursive space partition function as a tree following some criteria. When a new query comes, the learned partition tree is applied to the query and the distance computation is performed only on relevant elements falling in the same sub-tree. Representative methods are 
KD-tree \cite{silpa2008optimised} and $R^{*}$-tree \cite{beckmann1990r}. It is observed in previous studies that tree-based methods only work for low-dimensional data and their performances drop significantly for high-dimensional problems \cite{cai2019revisit}.
% and random projection tree \cite{dasgupta2008random}. 

{\bf Graph-based Methods} date back to theoretical work in graph theory \cite{aurenhammer1991voronoi,lee1980two,dearholt1988monotonic}. However, these theoretical guarantees only work for low-dimensional data \cite{aurenhammer1991voronoi,lee1980two} or require expensive ($O(n^2)$ or higher) index building complexity \cite{dearholt1988monotonic}, which is not scalable to large-scale datasets. Recent works are mostly geared toward approximations of different proximity graph structures to improve nearest neighbor search. There is a series of works on approximating $K$-nearest-neighbour graphs \cite{harwood2016fanng,hajebi2011fast,fu2016efanna,jin2014fast}. Most recent works approximate monotonic graph \cite{Fu2017-tg} or relative neighbour graph \cite{arya1993approximate,malkov2018efficient}. In essence, these methods first construct an approximated $K$-nearest-neighbour graph and prune redundant edges by different criteria inspired by different proximity graph structures. Some other works mixed the above criteria with other heuristics to prune the graph \cite{Fu2021-mj,jayaram2019diskann}. Some pruning strategies can even work on randomly initialized dense graphs \cite{jayaram2019diskann}. According to various empirical studies \cite{harwood2016fanng,cai2019revisit,aumuller2020ann}, graph-based methods achieve very competitive performance among all AKNNS methods. Despite concerns 
about scalability of graph-based methods due to their larger memory usage \cite{Douze2018-kk}, it has been shown that graph-based methods can be deployed in billion scale commercial usage \cite{Fu2017-tg}. In addition, recent studies also demonstrated that graph-based AKNNS can scale quite well on billion-scale benchmarks when implemented on SSD hard-disks
%
%also reported competitive results on billion-scale benchmarks
\cite{jayaram2019diskann,chen2021spann}. In this work, we aim at demonstrating a generic method to accelerate the inference speed of graph-based methods so we will mainly focus on in-memory scenarios.

% In the following sections, we provide a motivating observation which suggests that we can accelerate inference of graph-based methods by approximating distance computations. Next, we analyze the distance computation in a graph search and figure out that the key to approximate the distance is to estimate angles between neighboring residual vectors. We then propose a low-rank estimation method plus a distribution matching technique to improve the inference speed of  graph-based algorithms.

\vspace{-.3 cm}
\section{Methods}
\vspace{-.1cm}

\subsection{Observation: Most distance computations do not contribute to better search results}
\label{sec:observation1}
\vspace{-.1cm}

Once a search graph is built, graph-based methods use a greedy-search strategy (Algorithm \ref{alg:greedy}) to find relevant elements of a query in a database. It maintains two priority queues: candidate queue that stores potential candidates to expand and top results queue that stores current most similar candidates (line \ref{line:tc}). At each iteration, it finds the current nearest point in the candidate queue and explores its neighboring points. An upper-bound variable records the distance of the furthest element from the current top results queue to the query $q$ (line \ref{line:lb}). The search will stop when the current nearest distance from the candidate queue is larger than the upper-bound (line \ref{line:lbcriteria}), or there is no element left in the candidate queue (line \ref{line:emptycriteria}). The upper-bound not only controls termination of the search but also determines if a point will present in the candidate queue (line \ref{line:importantlb}). An exploring point will not be added into the candidate queue if the distance from the point to the query is larger than the upper-bound. Thus, upper-bound plays an important role as we need to spend computational resources on distance calculation  (dist function in line \ref{line:importantlb}) but it might not influence search results if the distance is larger than the upper-bound. Empirically, as shown in Figure \ref{fig::ratio}, we observe in two benchmark datasets that most of the  explorations end up having a larger distance than the upper-bound. 
Especially, starting from the mid-phase of a search, over 80 $\%$ of distance calculations are larger than the upper-bound. 
Using greedy graph search will inevitably waste  a significant amount of computing time on non-influential operations. \cite{li2020improving} also found this phenomenon and proposed to learn an early termination criterion by an ML model. Instead of only focusing on  the near-termination phase, we propose a more general framework by incorporating the idea of reducing the complexity of distance calculations into a graph search. The fact that most distance computations do not influence search results suggests that we don't need to have exact distance computations. A faster distance approximation can be applied in the search.

\begin{minipage}{.47\linewidth}

\begin{figure}[H]
  \centering
    \subfigure[]{
    \label{fig:fashionratio}
    \includegraphics[width=\linewidth]{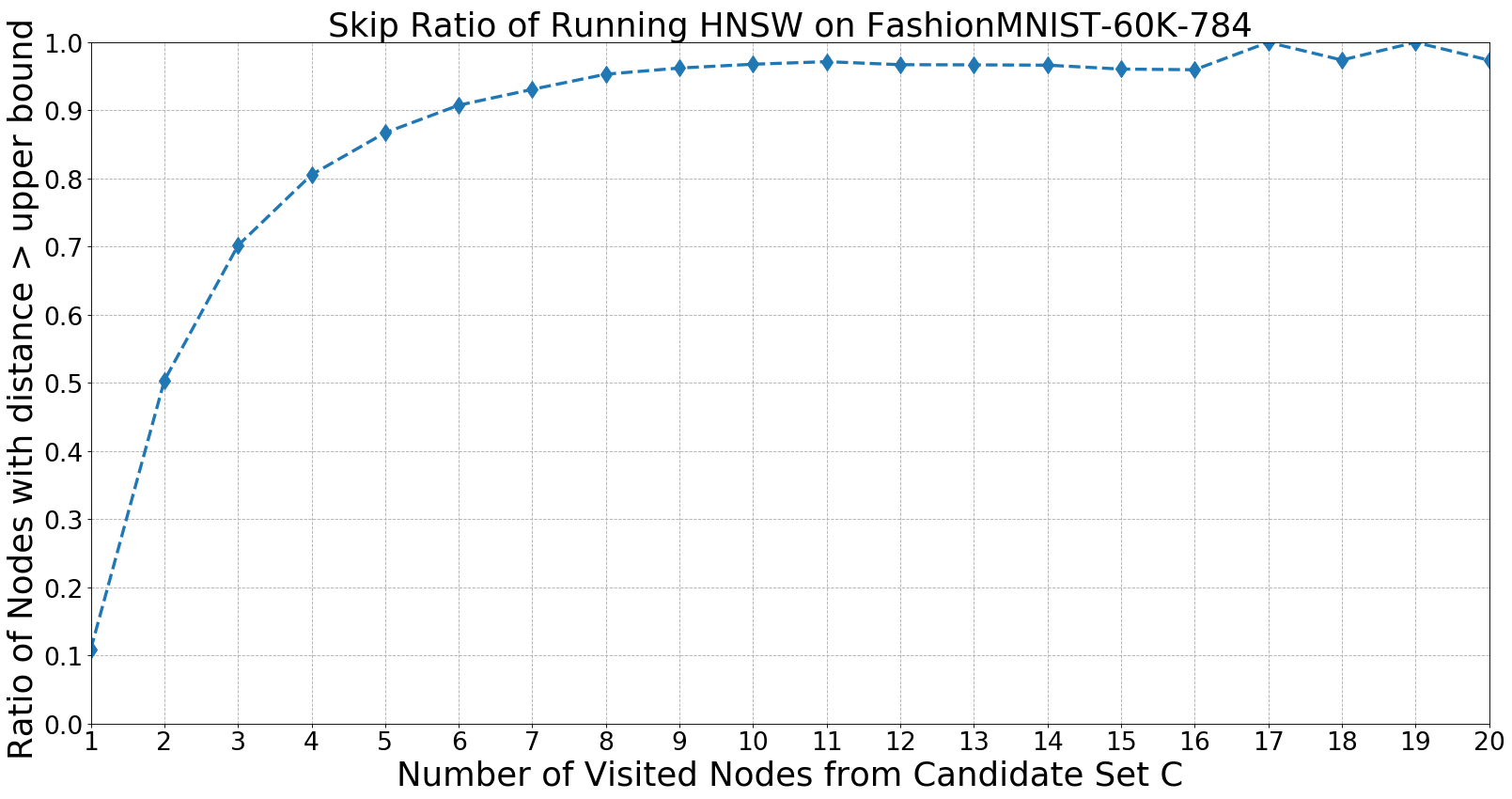}
    }
    \\
    \subfigure[]{
    \label{fig:gloveratio}
    \includegraphics[width=\linewidth]{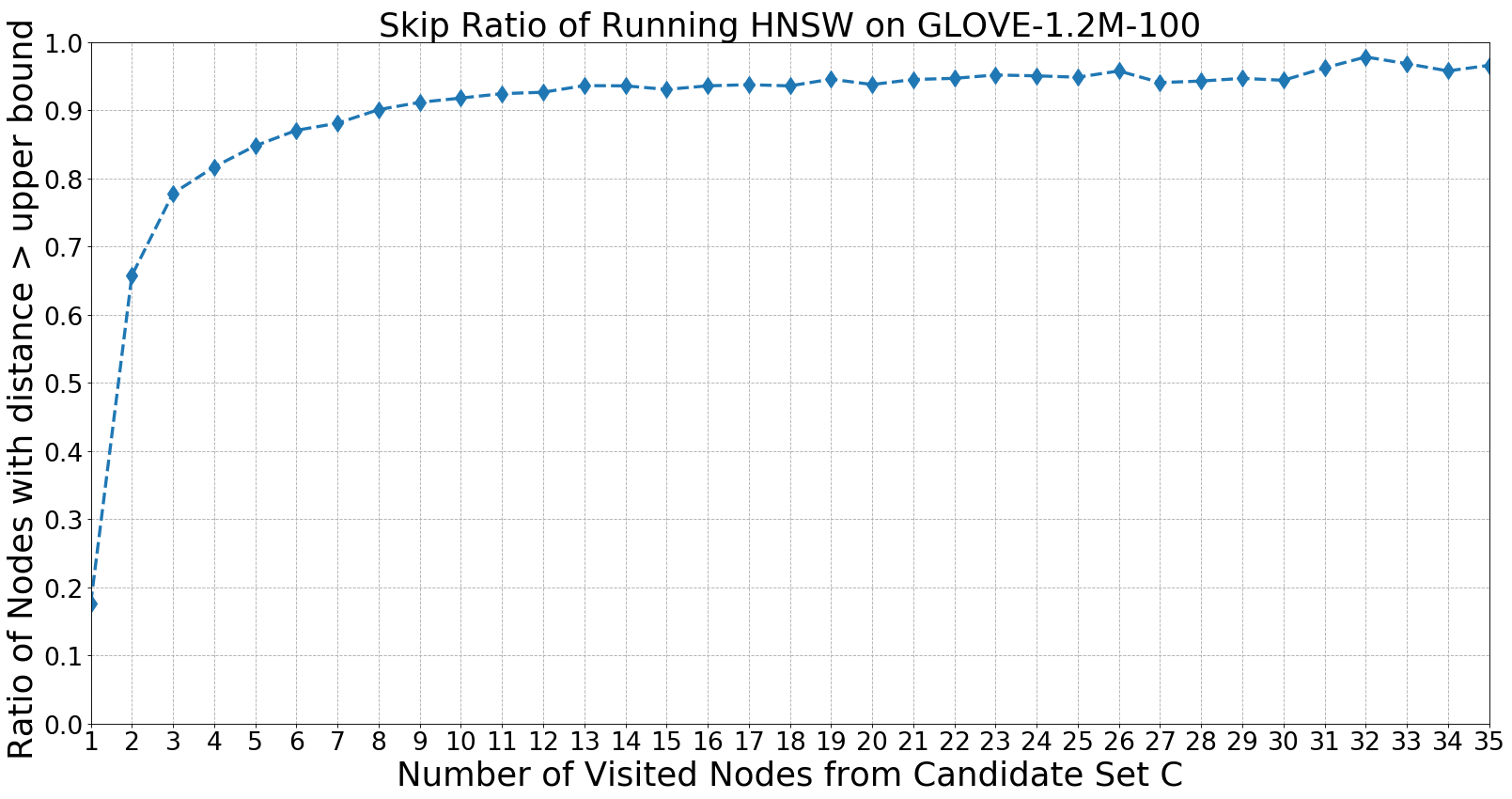}
    } 
  \vspace{-1em}
  \caption{ Illustration of the empirical observation that most points in a database will have distance to query larger than the upper-bound. (a)  FashionMNIST-60K-784 dataset (b) Glove-1.2M-100 dataset. Starting from the 5th step of greedy graph search (i.e., running line \ref{line:emptycriteria} in Algorithm \ref{alg:greedy} five times), both experiments show more than 80$\%$ of data points will be larger than the current upper-bound.    }
  \label{fig::ratio}\vspace*{-.3cm}
\end{figure}

\end{minipage}
\begin{minipage}{.03\linewidth}~\end{minipage}
\begin{minipage}{.5\linewidth}

\begin{algorithm}[H]
   \caption{Greedy Graph Search}
   \label{alg:greedy}
% \begin{algorithmic}[1]
 \KwIn{graph $G$, query $q$, start point $p$, distance dist(), number of nearest points to return $efs$}
 \KwOut{top results queue $T$}
candidate queue $C = \{$p$\}$ \label{line:cs}, currently top results queue $T = \{$p$\}$ \label{line:tc}, visited $V = \{$p$\}$ \;

\While{$C$ is not empty \label{line:emptycriteria}}{ 
cur $\leftarrow$ nearest element from $C$ to $q$ (i.e., current nearest point to expand)  \label{line:cline} \;

 ub $\leftarrow$ distance of the furthest element from $T$ to $q$ (i.e., upper bound of the candidate search)  \label{line:lb}\;
 
    \If{ dist(cur, q) $>$ ub  \label{line:lbcriteria}} {return $T$ }

    \For{point $n$ $\in$ neighbour of cur in $G$  \label{inalg:expandline}}{
     
     \If{ $n$ $\in$ V}{ continue}
    
       V.add($n$)
       
      \If{dist(n, q) $\le$ ub or $|T|$ $\le$ $efs$ \label{line:importantlb}}
      {
       $C$.add(n) 
       
        $T$.add(n)
        
        \If{$|T|$ $>$ $efs$  } {remove furthest point to $q$ from $T$}
        
         ub $\leftarrow$ distance of the furthest element from $T$ to $q$ (i.e., update ub)
      }

    }
    
}

return $T$

\end{algorithm}
\vspace{-.2cm}
\end{minipage}

\subsection{Modeling Distribution of Neighboring Residual Angles}
\label{sec:observation2}

% {\color{red}(Maybe not calling this an observation since I think how to simplify $q^Td$ into angle of residual is part of our algorithm. )}

Given a query $q$ and the current nearest point to $q$ in the candidate queue $c$, 
in Line \ref{inalg:expandline} of Algorithm \ref{alg:greedy}, we will expand the search by exploring neighbors of $c$. 
Consider a specific neighbor of $c$ called $d$, we have to compute distance between $q$ and $d$ in order to update the search results. 
Here, we will focus on the $L2$ distance (i.e., $Dist = \|q - d\|_2$). The 
derivations of inner-product and angle distance are provided in the Supplementary A.
As shown in the previous section, most distance computations will not contribute to the search in later stages, we aim at finding a fast approximation of $L2$ distance. 
 A key idea is that we can leverage $c$ to represent $q$ 
(and $d$) as a vector along $c$ (i.e., projection) and a vector orthogonal to $c$ (i.e., residual):
\begin{align}
    q = q_{proj} + q_{res}, \quad 
    q_{proj} = \frac{c^Tq}{c^Tc}c, \quad
    q_{res} = q - q_{proj}. \label{eq:qres}
\end{align}
%\begin{align}
%    q &= q_{proj} + q_{res} \nonumber, \\
%    q_{proj} &= \frac{c^Tq}{c^Tc}c \nonumber, \\
%    q_{res} &= q - q_{proj} \label{eq:qres}.
%\end{align}

% A schematic illustration of this decomposition is shown in Figure \ref{fig:proj_scheme}. 

In other words, we treat each center node as a basis and project the query and its neighboring points onto the center vector so query and data can be written as $q = q_{proj} + q_{res}$ and $d = d_{proj} + d_{res}$  respectively. With this formulation, the squared $L2$ distance can be written as:
\begin{align}
    Dist^2 &= \|q - d\|_2^2 \nonumber
    = \|q_{proj} + q_{res}  - d_{proj} - d_{res}\|_2^2 \nonumber
    = \|(q_{proj} - d_{proj}) + (q_{res}   - d_{res})\|_2^2 \nonumber\\
    % &=  \|(q_{proj} - d_{proj}) + (q_{res}   - d_{res})\|_2^2  \nonumber\\
    &=  \|(q_{proj} - d_{proj})\|_2^2 + \|(q_{res}  - d_{res})\|_2^2 \nonumber + 2(q_{proj} - d_{proj})^T(q_{res}  - d_{res})  \nonumber\\
    &\stackrel{(a)}= \|(q_{proj} - d_{proj})\|_2^2 + \|(q_{res}  - d_{res})\|_2^2 \nonumber\\
   &= \|(q_{proj} - d_{proj})\|_2^2 + \|q_{res}\|_2^2  + \|d_{res}\|_2^2 - 2 q_{res}^Td_{res} \label{eq:decomposition},
\end{align}
where (a) comes from the fact that projection vectors are orthogonal to residual vectors so the inner product vanishes. For $d_{proj}$ and $d_{res}$, we can pre-calculate these values after the search graph is constructed. For $q_{proj}$, notice that center node $c$ is extracted from the candidate queue (line \ref{line:cline} of Algorithm \ref{alg:greedy}). That means we must have already visited $c$ before. Thus, $\|q - c\|_2$ has been calculated and we can get $q^Tc$ by a simple algebraic manipulation:
\begin{align*}
    q^Tc = \frac{\|q\|_2^2 + \|c\|_2^2- \|q - c\|_2^2 }{2}.
\end{align*}

% \begin{figure}
%     \includegraphics[width=.85\linewidth]{}
%     \caption{Illustration of decomposition by center point. Query and neighbouring data point can be expressed by or vectors parallel and orthogonal to the center vector. }
%     \label{fig:proj_scheme}
%     \vspace{-.2cm}
% \end{figure}

Since calculation of $\|q\|_2^2 $ is a one-time task for a query, it's not too costly when a dataset is moderately large. $\|c\|_2^2$ can again be pre-computed in advance so $q^Tc$ and thus $q_{proj}$ can be obtained in just a few arithmetic operations. Also notice that $\|q\|_2^2$ =  $\|q_{proj}\|_2^2$ + $\|q_{res}\|_2^2$ as $q_{proj}$ and $q_{res}$ are orthogonal, so we can get $\|q_{res}\|_2^2$ by calculating 
$\|q\|_2^2$ - $\|q_{proj}\|_2^2$ in few operations too.

% \begin{minipage}{1\linewidth}

Therefore, the only uncertain term in Eq.  (\ref{eq:decomposition}) is $q_{res}^Td_{res}$. If we can estimate this term with less computational resources, we can obtain a fast yet accurate approximation of $L2$ distance. Since we don't have direct access to the distribution of $q$ and thus $q_{res}$, we hypothesize we can instead use the distribution of residual vectors between neighbors of $c$ to approximate the distribution of $q_{res}^Td_{res}$ term. The rationale behind this is as we only approximate $q_{res}^Td_{res}$ when $q$ and $c$ are close enough (i.e., $c$ is selected in line \ref{line:cline} of Algorithm \ref{alg:greedy}), both $q$ and $d$ could be treated as near points in our search graph and thus interaction between $q_{res}$ and $d_{res}$ might be well approximated by ${{d^\prime}_{res}}^T d_{res}$, where ${d}^{\prime}$ is another neighbouring point of $c$ and ${d}^{\prime}_{res}$ is its residual vector. Empirically, as shown in the left column of Figure \ref{fig::hist}, angles between  residual vectors of sampled neighbors (i.e., $d,d^{\prime} \in \text{neighbor}(c)$) distributes like a Gaussian. In particular, compared to the distribution of direct inner-product ${d^{\prime}_{res}}^Td_{res}$ (right column of Figure \ref{fig::hist}), the distribution $\text{cos}(d^{\prime}_{res},d_{res})$ is less-skewed and thus more alike Gaussian. This motivates us to design an efficient approximator of $\text{cos}(q_{res},d_{res})$ and obtain $q_{res}^Td_{res}$ by $\|q_{res}\|_2 \|d_{res}\|_2\text{cos}(q_{res},d_{res})$.

\subsection{\ourmethod{}: Fast Inference by Low-rank Angle Estimation and Distribution Matching}

% \end{minipage}
% \begin{figure}[t]

% \begin{minipage}{.5\textwidth}

%   \begin{tabular}{cc}
%       \hspace*{-.8cm}
%     \subfigure[]{
%     \includegraphics[width=0.45\linewidth]{}
%     }&
%     \subfigure[]{
%     \label{fig:gloveratio}
%     \includegraphics[width=0.45\linewidth]{}
%     } \\
%     \hspace*{-.8cm}
%     \subfigure[]{
%     \includegraphics[width=0.45\linewidth]{}
%     }&
%     \subfigure[]{
%     \label{fig:gloveratio}
%     \includegraphics[width=0.45\linewidth]{}
%     }
%   \end{tabular}
%   \caption{\small Illustration of the empirical observation that normalized cosine values of neighboring residual vectors distribute as a Gaussian distribution on FashionMNIST-60K-784 and SIFT-1M-128. Left column (a) and (c): angles of neighbouring residual pairs distribute alike Gaussian. Right column (b) and (d): un-normalized inner-product values between neighbouring residual pairs are more skewed.  }
%   \label{fig::hist}
%   \end{minipage}
% \end{figure}

\begin{figure}

\begin{minipage}{.48\textwidth}
\vspace{15 pt}
    \hfill\includegraphics[width=1\linewidth]{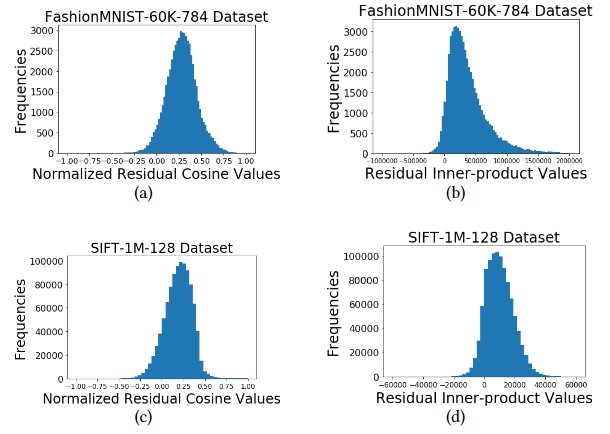}
    \vspace{-22pt}
    \caption{\small Illustration of the empirical observation that normalized cosine values of neighboring residual vectors distribute as a Gaussian distribution on FashionMNIST-60K-784 and SIFT-1M-128. Left column (a) and (c): angles of neighbouring residual pairs distribute alike Gaussian. Right column (b) and (d): un-normalized inner-product values between neighbouring residual pairs are more skewed. }
  \label{fig::hist}
\end{minipage}%
\hfill
\begin{minipage}{.48\textwidth}
    \hfill\includegraphics[width=.95\linewidth]{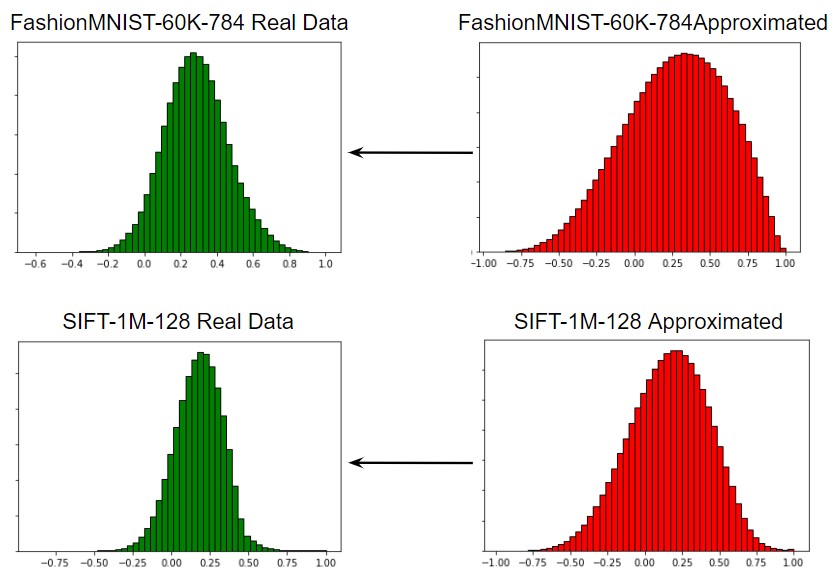}
    \caption{\small Illustration of Distribution Matching. In the left column, we show correct angle distributions of FashionMNIST-60K-784 and SIFT-1M-128. In the right column, we show angles of neighbouring residual pairs calculated by low-rank approximation ($r$ = 16). Our goal is to transform approximated results (in red) into real ones (in green).  }
    \label{fig:distmatch}
\end{minipage}

\end{figure}

%\textbf{Low-rank Estimation} \\
\paragraph{\bf Low-rank Estimation}
Motivated by the above derivations, we aim at finding an efficient estimation of angles between all pairs of neighboring residual vectors. In AKNNS literature, a popular method for estimating this is Locality Sensitive Hashing (LSH) and its variants. In particular, Random Projection-based LSH (RPLSH) \cite{charikar2002similarity} is reported to achieve good average performance on various benchmark datasets \cite{cai2019revisit}. RPLSH samples $r$ random vectors from Normal distribution to form a projection matrix $P \in \mathbb{R}^{r \times m}$, where $m$ is the dimension of data and query.  
$L2$ distance between two vectors $x,y \in \mathbb{R}^{m}$ can be approximated by the distance in projected space, and the error,
\begin{align*}
    \left\|\| Px - Py \|_2^2 - \|x - y \|_2^2  \right\|_2,
\end{align*}
is bounded probabilistically \cite{johnson1984extensions}. We can further binarize the projection results to form a compact representation, and the angle between $x$ and $y$ can be approximated by hamming distance of the signed results: hamm(sgn($Px$),sgn($Py$)$)\frac{\pi}{r}$. However, there is an immediate disadvantage with this approach. Random projection guarantees worst case performance \cite{freksen2021introduction} and it is oblivious of the data distribution. 
% In practice we can achieve much better performance by exploiting the information of data. 
% Second, one important benefit is hamming distance of binarized representations can be calculated efficiently by bit-wise operations. But this scheme could not leverage the latest SIMD technology which performs parallel instructions at a cost of only one.  Advantages of bit-wise operations are less significant with the presence of SIMD.
% , which saves both memory storage and CPU FLOPs
Since we can sample abundant neighboring residual vectors from the training database, we can leverage the data information to obtain a better approximation. Formally, given an existing search graph $G=(D,E)$ where $D$ are nodes in the graph corresponding to data points and $E$ are edges connecting data points, we collect all residual vectors into $D_{res} \in \mathbb{R}^{m \times N}$, where $N$ is total number of edges in $G$ (i.e., $|E|$); and we assume $D_{res}$ spans the whole space which residual vectors lie in. The approximation problem can be formulated as the following optimization problem:
\begin{align}
\label{eq:optimization}
    \argmin_{P \in \mathbb{R}^{r \times m}}   \mathbb{E}_{x,y \sim D_{res}} \left\|\| Px - Py \|_2^2 - \|x - y \|_2^2  \right\|_2,
\end{align}
where we aim at finding an optimal $P$ minimizing the approximating error over the residual pairs $D_{res}$ from training data. It's not hard to see that the Singular Value Decomposition (SVD) of $D_{res}$ will provide an answer to the above optimization problem, and thus we can use SVD to find better $r$ lower-dimensions to estimate the angle of neighboring residual vectors. 
\begin{proposition}
\label{ref:proposition1}
Given a residual vector matrix $D_{res}  \in \mathbb{R}^{m \times N}$, and denoting $D_{res} = USV^T$ as the Singular Value Decomposition of $D_{res}$. $U_{1:r}$, the first $r$ columns of $U$ is an optimal solution of optimization problem Eq. (\ref{eq:optimization}).
\end{proposition}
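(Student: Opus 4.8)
The plan is to rewrite the objective so that the random projection $P$ only enters through the Gram-type quantity $P^\top P$, and then recognize the resulting problem as a best rank-$r$ approximation problem solved by truncated SVD. First I would expand the inner term: for any pair $x,y$,
\begin{align*}
\|Px-Py\|_2^2 - \|x-y\|_2^2 = (x-y)^\top (P^\top P - I)(x-y).
\end{align*}
Writing $z = x-y$ and $M = P^\top P - I \in \mathbb{R}^{m\times m}$ (a symmetric matrix), the objective in Eq. (\ref{eq:optimization}) becomes $\mathbb{E}_{x,y\sim D_{res}} | z^\top M z |$. The next step is to argue that minimizing this expectation is (up to the usual subtleties about $\ell_1$ vs.\ $\ell_2$ aggregation, which I will address below) governed by how well the range of $P^\top$ captures the dominant directions of the empirical second-moment matrix $C := \frac{1}{N}D_{res}D_{res}^\top = \frac{1}{N}U S^2 U^\top$. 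Concretely, averaging $z z^\top$ over $z = x - y$ with $x,y$ drawn from the columns of $D_{res}$ gives a matrix proportional to $C$ (plus a cross term that vanishes if residuals are centered, or is otherwise a fixed offset independent of $P$), so $\mathbb{E}\, z^\top M z = \langle M, \tilde C\rangle$ for a fixed PSD matrix $\tilde C$ with the same eigenvectors $U$ as $C$.

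Second, I would restrict to the natural class of projections for which the bound is tight: those $P$ whose rows are orthonormal, so that $P^\top P$ is an orthogonal projector $\Pi$ of rank $r$ and $M = \Pi - I = -(I-\Pi) = -\Pi^\perp$. Then $\|Px-Py\|_2^2 = \|x-y\|_2^2 - \|\Pi^\perp z\|_2^2 \le \|x-y\|_2^2$, so the absolute value disappears and the objective equals $\mathbb{E}_{x,y\sim D_{res}}\|\Pi^\perp(x-y)\|_2^2$, i.e., the energy of the residual pairs lost in the discarded subspace. Expanding, this is $\langle \Pi^\perp, \tilde C\rangle = \operatorname{tr}(\tilde C) - \langle \Pi, \tilde C\rangle$, which is minimized by choosing $\Pi$ to project onto the top-$r$ eigenspace of $\tilde C$. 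Since $\tilde C$ and $C = \frac1N US^2U^\top$ share eigenvectors $U$ ordered by the singular values in $S$, the optimal $\Pi$ is $U_{1:r}U_{1:r}^\top$, i.e., $P = U_{1:r}^\top$ (the first $r$ columns of $U$, as a row-projection matrix) is optimal — which is exactly the claim. I would close by invoking the Eckart–Young / Ky Fan characterization: among rank-$r$ orthogonal projectors $\Pi$, $\langle \Pi, \tilde C\rangle$ is maximized by projection onto the leading eigenvectors, and equivalently $\|D_{res} - \Pi D_{res}\|_F$ is minimized by the truncated SVD factor $U_{1:r}$.

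The main obstacle is the mismatch between the $\ell_1$-style aggregation actually written in Eq. (\ref{eq:optimization}) — an expectation of an absolute value, or an $\ell_2$ norm of a scalar — and the clean quadratic (Frobenius/$\ell_2^2$) objective that SVD optimizes exactly. I expect the intended reading is that, after restricting to orthonormal-row $P$ (equivalently, orthogonal projectors $P^\top P$), the integrand $\|Px-Py\|_2^2 - \|x-y\|_2^2 = -\|\Pi^\perp(x-y)\|_2^2$ is sign-definite, so $|\cdot|$ simply removes the sign and the expectation becomes linear in $\Pi$; the SVD optimality is then exact and not merely heuristic. If one insists on unconstrained $P$, then $P^\top P$ need not be a projector and one must separately argue the optimum is attained in the orthonormal-row family (which follows because shrinking or stretching row norms only increases the one-sided error once the subspace is fixed). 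I would therefore state the proof under the orthonormal-row normalization, note that this is without loss of generality for the error criterion, and then the reduction to truncated SVD via Eckart–Young finishes it; the remaining steps are routine linear algebra.
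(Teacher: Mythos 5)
Your proposal is correct in substance and reaches the right answer, but it takes a genuinely different route from the paper. The paper stacks all $\binom{N}{2}$ pairs into matrices $X,Y$, forms $Z = X - Y$, and argues directly on $(\|PZ\|_F^2 - \|Z\|_F^2)^2$; it then claims that because $X$ and $Y$ are "duplicating and re-ordering" of $D_{res}$ one can write $X = U S V_x^\top$, $Y = U S V_y^\top$ with the \emph{same} $U,S$ as $D_{res}$, so $Z = US(V_x - V_y)^\top$ "shares the same basis." You instead expand the integrand as $(x-y)^\top(P^\top P - I)(x-y)$, pass to the second-moment matrix $\mathbb{E}[zz^\top]$, restrict to orthonormal-row $P$ so the integrand is sign-definite, and close with Eckart--Young / Ky Fan. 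The quadratic-form-plus-projector view is cleaner and, importantly, you are explicit about a step the paper silently skips: the objective is an expectation of an absolute value, not a squared Frobenius norm, and turning one into the other requires the integrand to have a definite sign, which is exactly what the orthonormal-row normalization buys you. The paper's own conversion $\mathbb{E}_{x,y}\big|\|Px-Py\|^2 - \|x-y\|^2\big| \mapsto (\|PX-PY\|_F^2 - \|X-Y\|_F^2)^2$ is a $\sum_i |a_i|$ versus $(\sum_i a_i)^2$ swap that is unjustified without that sign-definiteness.

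Both proofs share a second gap that you flag but do not fully close: the second-moment matrix of \emph{differences} $\mathbb{E}[(x-y)(x-y)^\top] = 2\,\mathbb{E}[xx^\top] - 2\,\bar{x}\bar{x}^\top$ has the same eigenvectors as $D_{res}D_{res}^\top$ only when the columns of $D_{res}$ are mean-centered; otherwise the rank-one offset $\bar{x}\bar{x}^\top$ can rotate the leading eigenspace, and "fixed offset independent of $P$" is not enough to conclude that $U_{1:r}$ is still optimal. The paper's version of the same gap is more severe: $Z = US(V_x - V_y)^\top$ is not an SVD of $Z$ because $V_x - V_y$ is not orthogonal, so the claim that $Z$ and $D_{res}$ have the same left singular vectors \emph{and the same ordering} does not follow from that factorization — only $\operatorname{range}(Z) \subseteq \operatorname{range}(U)$ does. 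If you want an airtight statement, either assume $D_{res}$ is centered (so $\mathbb{E}[zz^\top] \propto D_{res}D_{res}^\top$ and your argument closes cleanly), or note that in the intended use case the residuals are constructed by subtracting a projection onto $c$ and the empirical mean is near zero, and state the result for the centered second-moment matrix.
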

\begin{proof}
The proof is provided in the Supplementary B.
\end{proof}
% \begin{align*}
%     &\|\| Px - Py \|_2^2 - \|x - y \|_2^2 \\
%     &= \|Px\|_2^2 +  \|Py\|_2^2 - 2(Px)^T(Py) - (  \|x\|_2^2 +  \|y\|_2^2 - 2x^Ty ) \\
%     &=
% \end{align*}

%\noindent \textbf{Distribution Matching} \\
\paragraph{\bf Distribution Matching}
In addition to efficient low-rank estimation of angles, we further propose a distribution matching method to improve the performance. Despite as discussed  in Section \ref{sec:observation2} that angles between neighbouring residual vectors tend to be distributed alike Gaussian, this attribute only partially transfers to the distribution of angles approximated by low-rank computations as shown in Figure \ref{fig:distmatch}. Although the approximated distribution still looks alike Gaussian, its distribution is slightly skewed. Furthermore, its mean is shifted and its variance is larger than the real data distribution. To mitigate this, we propose to transform the approximated distributions into real data distributions by matching their mean and variance. Formally, assume angles of neighboring residual vectors follows a Gaussian distribution $\mathcal{N}(\mu,\sigma)$, and the approximated angles distributes as  $\mathcal{N}(\hat{\mu},\hat{\sigma})$.
Given a residual pair $x$ and $y$ with a low-rank projection matrix $P$, we can calculate the approximated angle $\hat{t} = \text{cos}(Px,Py)$. Under our assumption that it comes from a draw of $\mathcal{N}(\hat{\mu},\hat{\sigma})$, the value can be transformed by $ t = (\hat{t} - \hat{\mu})\frac{\sigma}{\hat{\sigma}} + \mu$.
 The transformed angle estimation $t$ then follows $\mathcal{N}(\mu,\sigma)$ as desired. Parameters  $\mu,\sigma,\hat{\mu},\hat{\sigma}$ can be estimated by using training data. \\
 
%  \begin{figure}[t]
% \centering
%     \includegraphics[width=.9\linewidth]{fig/}
%     \caption{Illustration of Distribution Matching. In the left column, we show correct angle distributions of FashionMNIST-60K-784 and SIFT-1M-128. In the right column, we show angles of neighbouring residual pairs calculated by low-rank approximation ($r$ = 16). Our goal is to transform approximated results (in red) into real ones (in green).  }
%     \label{fig:distmatch}
% \end{figure}

% \begin{algorithm}
%   
%  
% \begin{algorithmic}[1]
%   \STATE {\bfseries Input:}
%   \STATE {\bfseries Output:} 
%       \STATE
%       \STATE 

%     \STATE
%     \STATE 
%     \STATE 
%     \FORALL { }
%         \STATE
%         \STATE 
%     \ENDFOR
%     \STATE
%     \STATE
%     \STATE 
%     \STATE 
%     \STATE 
% \end{algorithmic}
% \end{algorithm}

  \paragraph{\bf Overall Algorithm}
Construction of \ourmethod{} can be summarized in Algorithm \ref{alg:build}. Our aim is to provide a generic acceleration for all graph-based search. Thus, we can build the search index from any existing graph $G$. \ourmethod{} first iterates through all nodes in the graph. For each node, \ourmethod{} samples a pair of distinct nodes from its neighbors. In addition, we also calculate the residual vector of one sampled point and store it for later usage. We hypothesize the collected residual vectors $D_{res}$ spans the residual space, and we can find its optimal low-rank approximation by SVD. 
Once the low-rank projection $P$ is ready, we can estimate the mean and variance of angle distribution and approximated distribution respectively (i.e., line \ref{alg:mu_d}, \ref{alg:mu_a} in Algorithm \ref{alg:build}).
Certainly, this distribution matching scheme would still produce error. We further compute the average $L1$ error between real and approximated angles to serve as an error correction term. With this information saved in a search

\begin{figure*}
  \centering
    \includegraphics[width=.85\linewidth]{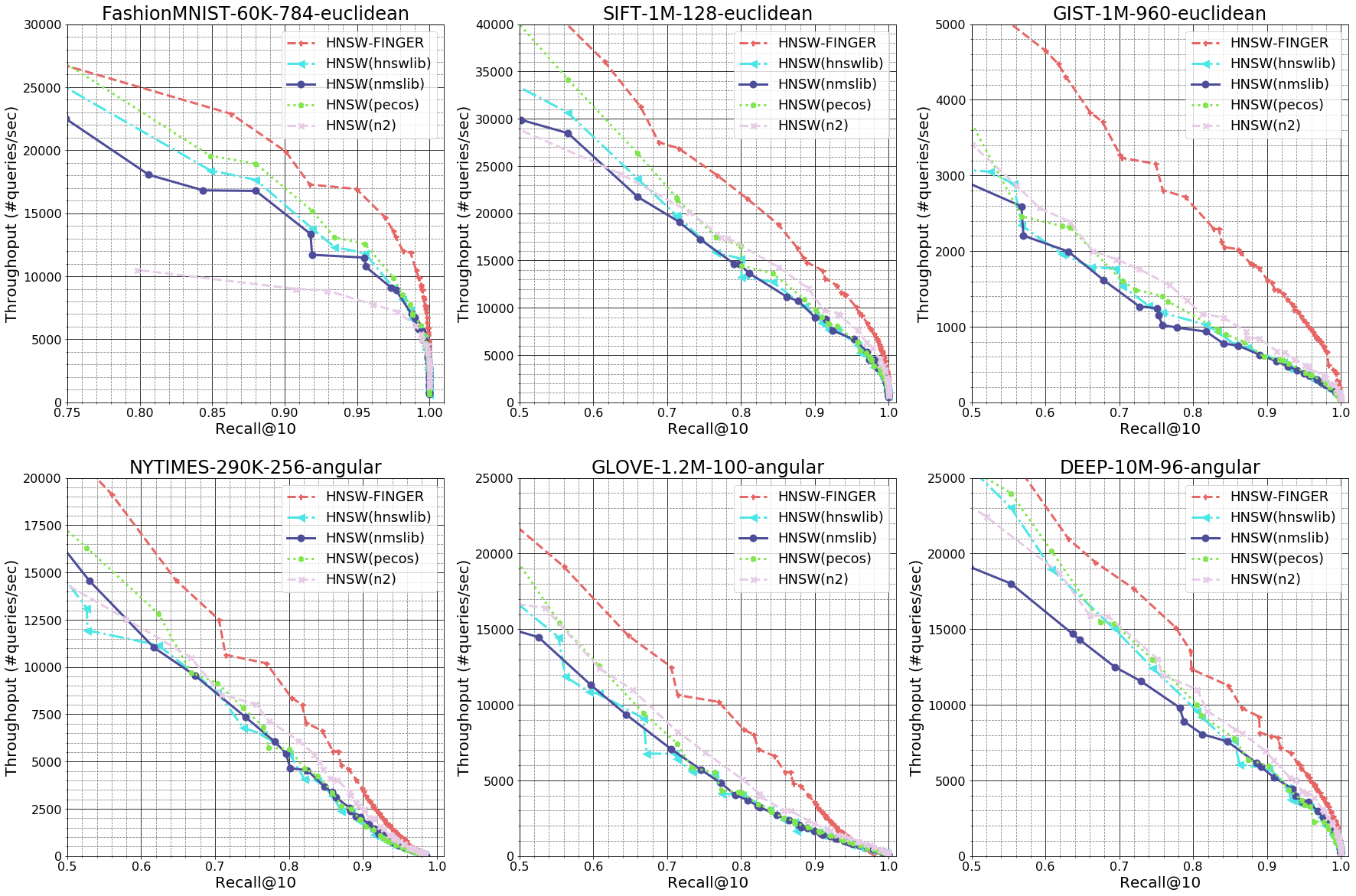}
  \vspace{-1em}
  \caption{ \small Experimental results of graph-based methods. Throughput versus Recall@10 chart is plotted for all datasets. Top row presents datasets with $L2$ distance measure and bottom row presents datasets with angular distance measure. We can observe a significant performance gain of \ourmethod{} over all existing graph-based methods. }
  \label{fig::main_result}
\end{figure*}
%\noindent \textbf{Overall Algorithm} \\
%\vspace{-.5cm}

\begin{minipage}{.45\linewidth}
 index, Algorithm \ref{alg:appx_dist} approximates the distance between a query $q$ and a data point $d$. Notice that we explicitly write out the projection matrix $P$ and center node $c$ in Algorithm \ref{alg:appx_dist} to make it easier to understand the full approximation workflow. In practice, the projected residual vector $Pd_{res}$ can be pre-computed and stored. Detailed computation is illustrated in the Supplementary.

\section{Experimental Results}

\subsection{Experimental Setups}
\label{sec:experimental}

%\noindent \textbf{Baseline Methods} \\
\paragraph{\bf Baseline Methods}
We compare \ourmethod{} to the most competitive graph-based and quantization methods. We include different  implementations of the popular HNSW methods such as NMSLIB \cite{malkov2018efficient}, n2\footnote{https://github.com/kakao/n2/tree/master}, PECOS \cite{yu2020pecos} and HNSWLIB \cite{malkov2018efficient}.
Other graph construction methods include NGT-PANNG \cite{sugawara2016approximately} , VAMANA(DiskANN) \cite{jayaram2019diskann} and PyNNDescent \cite{dong2011efficient}. Since our goal is to demonstrate \ourmethod{} can improve search efficiency of an underlying graph, we mainly include these competitive methods with good python interface and documentation. For quantization methods, we compare to the best performing ScaNN \cite{guo2020accelerating} and Faiss-IVFPQFS \cite{JDH17}.
In experiments, we combine \ourmethod{} with HNSW as it is a simple and prevalent method. The implementation of HNSW-\ourmethod{} is based on a modification of PECOS as its codebase is easy to read and extend. Pre-processing time and memory footprint are discussed in the Supplementary F.

\end{minipage}
\begin{minipage}{.03\linewidth}~\end{minipage}
\begin{minipage}{.53\linewidth}
  \begin{algorithm}[H]
\caption{Construction of \ourmethod{}}
 \label{alg:build}
     \setcounter{AlgoLine}{0}
 \KwIn{ graph $G = (D,E)$, rank $r$ }
 \KwOut{projection matrix $P$ and distribution parameters $\mu, \sigma, \hat{\mu}, \hat{\sigma}$, $\epsilon$ }
 
 $D_{res}$ = \{\}, $S$ = \{\}
 \For{$c \in D$} {
  Sample $d$,$d^{\prime} \in$  neighbors of $c$ and add it to $S$
  
  Calculate $d_{res}$,  $D_{res}$.add($d_{res}$)
   
  }
  
   Calculate SVD $U,S,V = \text{SVD}(D_{res})$
   
   $P$ = $U_{1:r}^T$, $X$ = \{\}, $Y$ = \{\}
   
   \For{pair $d,d^{\prime} \in S$ }{
    X.add(cos($d$,$d^{\prime}$)), Y.add(cos($Pd$,$Pd^{\prime}$))
   }
   
    $N = \text{size of} $ $X$,

   $\mu_{x}$ = $\frac{1}{N}$ $\sum\limits_{x \in X} x$,     $\sigma_{x} = $   $\frac{1}{N}$ $\sum\limits_{x \in X} (x - \mu_{x})^2$ \label{alg:mu_d}, 
   
   $\mu_{y}$ = $\frac{1}{N}$ $\sum\limits_{y \in Y} y$,     $\sigma_{y} = $   $\frac{1}{N}$ $\sum\limits_{y \in Y} (y - \mu_{y})^2$ \label{alg:mu_a}
   
   $\epsilon$ = $\frac{1}{N}$ $\sum\limits_{i=1}^{N}$  $\left|(Y_{i} - \mu_{y})\frac{\sigma_{x}}{\sigma_{y}} + \mu_{x} - X_{i} \right|$
   
   return $P$, $\mu, \sigma, \hat{\mu}, \hat{\sigma}$,$\epsilon$
   
\end{algorithm}

\begin{algorithm}[H]
    \setcounter{AlgoLine}{0}
   \caption{Approximate Distance Function}
   \label{alg:appx_dist}
   \KwIn{query $q$, projection matrix $P$, center node $c$, data point $d \in$ neighbors of $c$, distribution parameters $\mu, \sigma, \hat{\mu}, \hat{\sigma}$, $\epsilon$}
   \KwOut{ $t$, the approximated distance  between $q$ and $d$}

    compute $q_{res}$ and $d_{res}$ with $c$ and Eq. \ref{eq:qres}
    
    compute $\hat{t} = \text{cos}(Pq_{res},Pd_{res})$
    
    $t = (\hat{t} - \hat{\mu})\frac{\sigma}{\hat{\sigma}} + \mu$, t = t + $\epsilon$,   return t
\end{algorithm}
\end{minipage}

\vspace{-3pt}

% in the search index to save inference time. $Pq_{res}$ can also be easily calculated by \begin{align*}
%     Pq_{res} &= P(q - q_{proj}) 
%     = Pq - Pq_{proj} 
%     = Pq - \frac{c^Tq}{c^Tc}Pc.
% \end{align*}
% $Pq$ needs only be calculated once for whole graph search so the cost is limited when the rank $r$ is not large. As we point out in Section \ref{sec:observation1}, $c^Tq$ must already be calculated when we explore neighbours of $c$. $c^Tc$ and $Pc$ can again be pre-computed and stored. Thus, the cost of $Pq_{res}$ is limited to a low-dimensional vector subtraction. To apply \ourmethod{}, we only need to slightly modify  Algorithm \ref{alg:greedy}. Specifically, we replace distance function dist($n$,$q$) in line \ref{line:lbcriteria} of Algorithm \ref{alg:greedy} with the approximation (i.e., Algorithm \ref{alg:appx_dist}). If the approximated distance is larger than the upper-bound variable, the search continues. Otherwise, we calculate precise distance between $q$ and $d$ and update the candidate queue correspondingly. This will make sure that all distance information in candidates set $C$ is correct so the algorithm won't terminate too early. The complete modified algorithm is listed in the Supplementary C, and the selection of rank $r$ is also discussed there.

%\noindent \textbf{Evaluation Protocol} \\
\paragraph{\bf Evaluation Protocol and Dataset}
% \textcolor{orange}{maybe we can trim this paragraph a bit if we running out of space? Since most ANN folks should be familiar with ANN-benchmark}
We follow ANN-benchmark protocol \cite{aumuller2020ann} to run all experiments. Instead of using a single set of hyperparameter, the protocol searches over a pre-defined set of hyper-parameters\footnote{https://github.com/erikbern/ann-benchmarks/blob/master/algos.yaml} for each method, and reports the best performance over each recall regime. In other words, it allows methods to compete others with its own best hyper-parmameters within each recall regime. We follow this protocol to measure recall@10 values and report the best performance over 10 runs. Results will be presented as throughput versus recall@10 charts. A method is better if the area under curve is larger in the plot. All experiments are run on AWS r5dn.24xlarge instance with Intel(R) Xeon(R) Platinum 8259CL CPU @ 2.50GHz.  We evaluate results over both $L2$-based and angular-based metric. We represent a dataset with the following format: (dataset name)-(training data size)-(dimensionality of dataset). For $L2$ distance measure, we evaluate on FashionMNIST-60K-784, SIFT-1M-128, and GIST-1M-960. For cosine distance measure, we evaluate on NYTIMES-290K-256, GLOVE-1.2M-100 and DEEP-10M-96. More details of each dataset can be found in \cite{aumuller2020ann}.
%\noindent \textbf{Datasets} \\

% \begin{minipage}{.7\textwidth}
\begin{figure}
\centering
\includegraphics[width=1\linewidth]{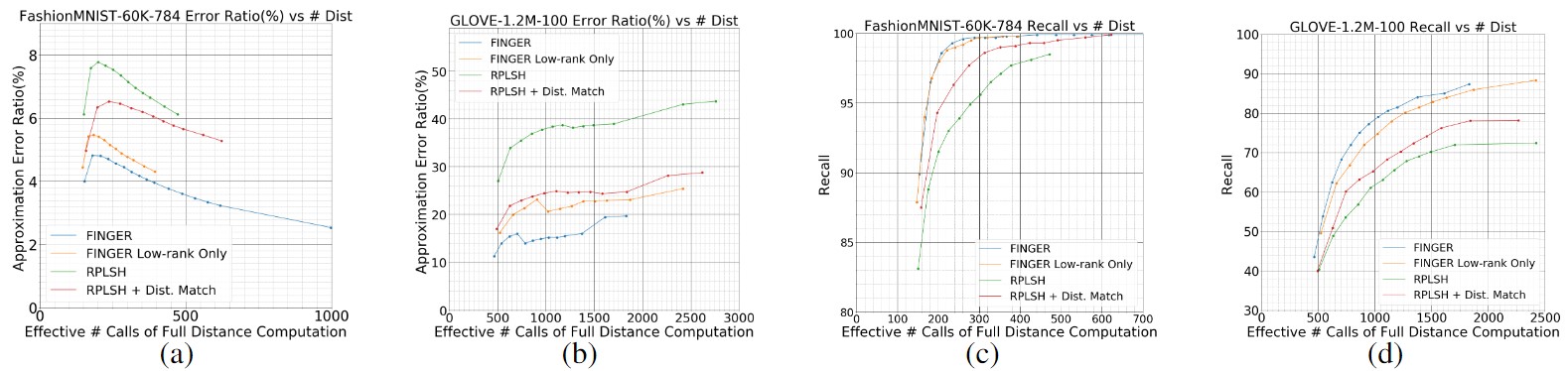}
\vspace*{-10pt}
  \caption{ \small Results of ablation studies on FashionMNIST-60K-784 and GLOVE-1.2M-100. (a) and (b) show approximation error($\%$) vs effective number of full distance calls. \ourmethod{} achieves smaller error than RPLSH. (c) and (d) show recall@10 vs effective number of full distance calls. \ourmethod{} achieves higher recalls.}
  \label{fig:aba2}
\end{figure}
% \end{minipage}

\vspace{-.3cm}
\subsection{Improvements of \ourmethod{} over HNSW}
\vspace{-.3cm}
In Figure \ref{fig::main_result}, we demonstrate how \ourmethod{} accelerates the competitive HNSW algorithm on all datasets. Since \ourmethod{} is implemented on top of PECOS, it's important for us to check if PECOS provides any advantage over other HNSW libraries. Results verify that across all 6 datasets, the performance of PECOS does not give an edge over other HNSW implementations, so the performance difference between \ourmethod{} and other HNSW implementations could be mostly attributed to the proposed approximate distance search scheme. We observe that \ourmethod{} greatly boosts the performance over all different datasets and outperforms existing graph-based algorithms. \ourmethod{} works better not only on datasets with large dimensionality such as FashionMNIST-60K-784 and GIST-1M-960, but also works for dimensionality within range between 96 to 128. This shows that \ourmethod{} can accelerate the distance computation across different dimensionalities.
Results of comparison to most competitive graph-based methods are shown in Figure \ref{fig:total} of the Supplementary D. Briefly speaking, HNSW-\ourmethod{} outperforms most state-of-the-art graph-based methods except FashionMNIST-60K-784 where PyNNDescent achieves the best and HNSW-\ourmethod{} is the runner-up. Notice that \ourmethod{} could also be implemented over other graph structures including PyNNDescent. We chose to build on top of HNSW algorithm only due to its simplicity and popularity. Studying which graph-based method benefits most from \ourmethod{} is an interesting future direction. Here, we aim at empirically demonstrating approximated distance function can be integrated into the greedy search for graph-based methods to achieve a better performance.

%\begin{wrapfigure}{r}{0.5\textwidth}
\begin{figure*}
  \centering
%   \begin{tabular}{ccc}
%     %   \subfigure[]{
%     \includegraphics[width=0.3\linewidth]{}
%     % }
%     &
%     % \subfigure[]{
%     \includegraphics[width=0.3\linewidth]{}
%     % }
%     &
%     % \subfigure[]{
%     \includegraphics[width=0.3\linewidth]{}
%     % } 
%     \\
%     %   \subfigure[]{
%     \includegraphics[width=0.3\linewidth]{}
%     % }
%     &
%     % \subfigure[]{
%     \includegraphics[width=0.3\linewidth]{}
%     % }
%     &
%     % \subfigure[]{
%     \includegraphics[width=0.3\linewidth]{}
%     % }   
%   \end{tabular}

\includegraphics[width=.85\linewidth]{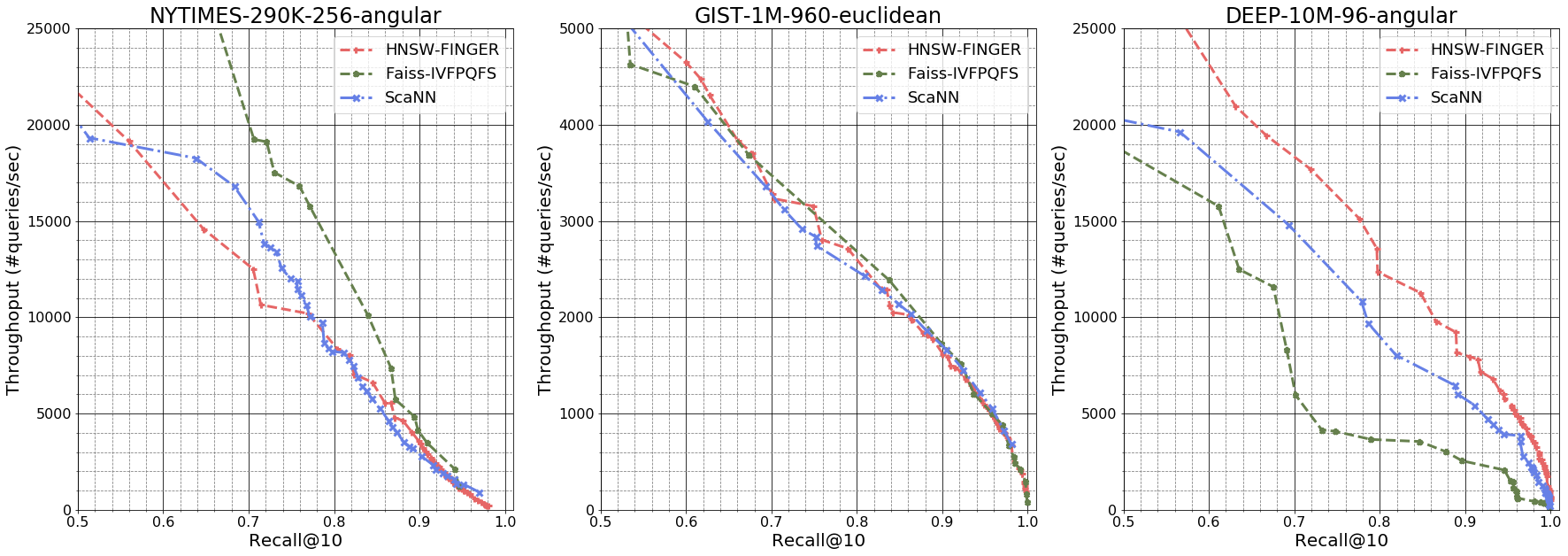}
  %\vspace{-1em}
  \caption{Comparisons to competitive quantization methods. Throughput versus Recall@10 chart is plotted for three datasets.
  %Top row presents datasets with $L2$ dåistance measure and bottom row presents datasets with angular distance measure. 
  We can observe each method has its pros and cons and there is no single method which performs best on all datasets.}
  \label{fig::quant_result}
  \vspace{-1em}
\end{figure*}
%\end{wrapfigure}

\vspace{-.2cm}
\subsection{Ablation Study}
\vspace{-.1cm}
%\noindent \textbf{Comparison to Locality Hashing} \\
We conduct an ablation study to see the effectiveness of each component of \ourmethod{}. First, we compare \ourmethod{} to the popular random projection locality hashing (RPLSH) for angle estimation. Since we have greatly optimized C++ implementation of \ourmethod{}, a direct comparison on wall-clock time won't be fair. Instead, we compare two schemes by counting the effective number of distance function calls. We collect the  number of full distance calls and approximate distance calls separately, and combine them into an effective number of distance calls. For example, if we call full $m$-dimensional distance $a$ times and $b$ times of $r$-dimensional approximate computations, we would have an effective distance calls of $a + b\frac{r}{m}$ times. We firstly analyze estimation quality by approximation error defined as $\frac{|t - \hat{t}|}{|t|}$ where $t$ is the true cosine angle value and $\hat{t}$ is the approximated value. Ideally, we could expect a better approximation scheme results in a smaller approximation error. Certainly, a smaller approximation doesn't necessarily yield better recall. Thus, we will also analyze the performance based on recall. Results of trade-off between approximation error ($\%$) and effective number of distance calls are shown in Figure \ref{fig:aba2}(a) for FashionMNIST-60K-784 and \ref{fig:aba2}(b) for GLOVE-1.2M-100. Corresponding results of recall vs effective distance calls are shown in Figure \ref{fig:aba2}(c) and \ref{fig:aba2}(d). We can see \ourmethod{} achieves smaller approximation errors compared to RPLSH on both datasets, which shows that \ourmethod{} is indeed a better low-rank approximation given data distribution. We also observe smaller approximation error transfers to higher recalls on both datasets. In addition, we apply distribution matching on RPLSH and found out this will greatly improve RPLSH. This shows distribution matching is a generic method that improves the performance of all different angle estimation methods. But even with the aid of distribution matching, RPLSH cannot achieve similar performance as \ourmethod{} and this shows the superiority of SVD results.
Since \ourmethod{} consists of a low-rank approximation module plus a distribution matching module, we are interested in studying their own effectiveness. We conduct similar analysis on full method and low-rank only version of \ourmethod{} shown in Figure \ref{fig:aba2}. Low-rank approximation alone still provides a much better angle estimation compared to RPLSH. Even without the distribution matching scheme, low-rank angle estimation outperforms RPLSH with distribution matching. We also observe limited difference between \ourmethod{} and \ourmethod{} without distribution matching in FashionMNIST-60K-784. However, the difference is more significant when it comes to GLOVE-1.2M-100 which still shows effective distribution matching. 

\vspace{-.4cm}
\subsection{Comparison to Quantization Results}
\vspace{-.3cm}
In addition to graph-based method, we are also interested in seeing the performance of HNSW-\ourmethod{} compared to the state-of-the-art quantization methods. Results of comparisons to quantization methods are shown in Figure \ref{fig::quant_result}. As we can observe, there is no single method achieving the best performance over all tasks.
% \textcolor{orange}{Update the following description, seems out-dated?}
Faiss-IVFPQFS performs well on NYTIMES-290K-256 but fails on DEEP-10M-96. ScaNN performs consistently well on all datasets but it doesn't achieve top performance on anyone. HNSW-\ourmethod{} performs competitively on GIST-1M-960 and DEEP-10M-96 but worse on NYTIMES-290K-256. These results showed that quantization provides some advantages over graph-based methods but the advantage is not consistent across datasets. Studying how to combine the advantage of quantization methods with \ourmethod{} and graph-based methods is an interesting future direction.

\vspace{-.4cm}
\section{Conclusions and Social Impact}
\vspace{-.4cm}
In this work, we propose \ourmethod{}, a fast inference method for graph-based AKNNS. \ourmethod{} approximates distance function in graph-based method by estimating angles between neighboring residual vectors. \ourmethod{} constructs low-rank bases to estimate residual angles and use distribution matching to achieve a better precision. The approximated distance can be used to bypass unnecessary distance evaluations, which translates into a faster searching. Empirically, \ourmethod{} on top of HNSW is shown to outperform all existing graph-based methods.

This work mainly focuses on accelerating existing models with approximate computations. It doesn't directly touch any controversial part of the data and thus it's unlikely providing any negative social impact. When used correctly with positive information to spread. It can help to accelerate the propagation as the work accelerates the inference speed.

\bibliographystyle{plain}
\bibliography{bibtex}

\begin{thebibliography}{10}

\bibitem{arya1993approximate}
Sunil Arya and David~M Mount.
\newblock Approximate nearest neighbor queries in fixed dimensions.
\newblock In {\em SODA}, volume~93, pages 271--280. Citeseer, 1993.

\bibitem{aumuller2020ann}
Martin Aum{\"u}ller, Erik Bernhardsson, and Alexander Faithfull.
\newblock Ann-benchmarks: A benchmarking tool for approximate nearest neighbor
  algorithms.
\newblock {\em Information Systems}, 87:101374, 2020.

\bibitem{aurenhammer1991voronoi}
Franz Aurenhammer.
\newblock Voronoi diagrams—a survey of a fundamental geometric data
  structure.
\newblock {\em ACM Computing Surveys (CSUR)}, 23(3):345--405, 1991.

\bibitem{beckmann1990r}
Norbert Beckmann, Hans-Peter Kriegel, Ralf Schneider, and Bernhard Seeger.
\newblock The r*-tree: An efficient and robust access method for points and
  rectangles.
\newblock In {\em SIGMOD}, pages 322--331, 1990.

\bibitem{bishop2006pattern}
Christopher~M Bishop.
\newblock Pattern recognition.
\newblock {\em Machine learning}, 128(9), 2006.

\bibitem{cai2019revisit}
Deng Cai.
\newblock A revisit of hashing algorithms for approximate nearest neighbor
  search.
\newblock {\em IEEE Transactions on Knowledge and Data Engineering}, 2019.

\bibitem{charikar2002similarity}
Moses~S Charikar.
\newblock Similarity estimation techniques from rounding algorithms.
\newblock In {\em STOC}, pages 380--388, 2002.

\bibitem{chen2018learning}
Patrick~H Chen, Si~Si, Sanjiv Kumar, Yang Li, and Cho-Jui Hsieh.
\newblock Learning to screen for fast softmax inference on large vocabulary
  neural networks.
\newblock In {\em ICLR}, 2019.

\bibitem{chen2021spann}
Qi~Chen, Bing Zhao, Haidong Wang, Mingqin Li, Chuanjie Liu, Zhiyong Zheng, Mao
  Yang, and Jingdong Wang.
\newblock Spann: Highly-efficient billion-scale approximate nearest
  neighborhood search.
\newblock {\em NeurIPS}, 34, 2021.

\bibitem{dearholt1988monotonic}
DW~Dearholt, N~Gonzales, and G~Kurup.
\newblock Monotonic search networks for computer vision databases.
\newblock In {\em Twenty-Second Asilomar Conference on Signals, Systems and
  Computers}, volume~2, pages 548--553. IEEE, 1988.

\bibitem{dong2011efficient}
Wei Dong, Charikar Moses, and Kai Li.
\newblock Efficient k-nearest neighbor graph construction for generic
  similarity measures.
\newblock In {\em WWW}, pages 577--586, 2011.

\bibitem{Douze2018-kk}
Matthijs Douze, Alexandre Sablayrolles, and Herv{\'e} J{\'e}gou.
\newblock Link and code: Fast indexing with graphs and compact regression
  codes.
\newblock In {\em CVPR}, pages 3646--3654, 2018.

\bibitem{freksen2021introduction}
Casper~Benjamin Freksen.
\newblock An introduction to johnson-lindenstrauss transforms.
\newblock {\em arXiv preprint arXiv:2103.00564}, 2021.

\bibitem{fu2016efanna}
Cong Fu and Deng Cai.
\newblock Efanna: An extremely fast approximate nearest neighbor search
  algorithm based on knn graph.
\newblock {\em arXiv preprint arXiv:1609.07228}, 2016.

\bibitem{Fu2021-mj}
Cong Fu, Changxu Wang, and Deng Cai.
\newblock High dimensional similarity search with satellite system graph:
  Efficiency, scalability, and unindexed query compatibility.
\newblock {\em IEEE Trans. Pattern Anal. Mach. Intell.}, PP, March 2021.

\bibitem{Fu2017-tg}
Cong Fu, Chao Xiang, Changxu Wang, and Deng Cai.
\newblock Fast approximate nearest neighbor search with the navigating
  spreading-out graph.
\newblock July 2017.

\bibitem{ge2013optimized}
Tiezheng Ge, Kaiming He, Qifa Ke, and Jian Sun.
\newblock Optimized product quantization.
\newblock {\em IEEE TPAMI}, 36(4):744--755, 2013.

\bibitem{gionis1999similarity}
Aristides Gionis, Piotr Indyk, Rajeev Motwani, et~al.
\newblock Similarity search in high dimensions via hashing.
\newblock In {\em VLDB}, volume~99, pages 518--529, 1999.

\bibitem{guo2020accelerating}
Ruiqi Guo, Philip Sun, Erik Lindgren, Quan Geng, David Simcha, Felix Chern, and
  Sanjiv Kumar.
\newblock Accelerating large-scale inference with anisotropic vector
  quantization.
\newblock In {\em ICML}, pages 3887--3896. PMLR, 2020.

\bibitem{hajebi2011fast}
Kiana Hajebi, Yasin Abbasi-Yadkori, Hossein Shahbazi, and Hong Zhang.
\newblock Fast approximate nearest-neighbor search with k-nearest neighbor
  graph.
\newblock In {\em Twenty-Second International Joint Conference on Artificial
  Intelligence}, 2011.

\bibitem{harwood2016fanng}
Ben Harwood and Tom Drummond.
\newblock Fanng: Fast approximate nearest neighbour graphs.
\newblock In {\em CVPR}, pages 5713--5722, 2016.

\bibitem{he2013k}
Kaiming He, Fang Wen, and Jian Sun.
\newblock K-means hashing: An affinity-preserving quantization method for
  learning binary compact codes.
\newblock In {\em CVPR}, 2013.

\bibitem{indyk1998approximate}
Piotr Indyk and Rajeev Motwani.
\newblock Approximate nearest neighbors: towards removing the curse of
  dimensionality.
\newblock In {\em STOC}, pages 604--613, 1998.

\bibitem{jayaram2019diskann}
Suhas Jayaram~Subramanya, Fnu Devvrit, Harsha~Vardhan Simhadri, Ravishankar
  Krishnawamy, and Rohan Kadekodi.
\newblock Diskann: Fast accurate billion-point nearest neighbor search on a
  single node.
\newblock {\em NeurIPS}, 32, 2019.

\bibitem{jegou2010product}
Herve Jegou, Matthijs Douze, and Cordelia Schmid.
\newblock Product quantization for nearest neighbor search.
\newblock {\em IEEE TPAMI}, 33(1):117--128, 2010.

\bibitem{jin2014fast}
Zhongming Jin, Debing Zhang, Yao Hu, Shiding Lin, Deng Cai, and Xiaofei He.
\newblock Fast and accurate hashing via iterative nearest neighbors expansion.
\newblock {\em IEEE transactions on cybernetics}, 44(11):2167--2177, 2014.

\bibitem{JDH17}
Jeff Johnson, Matthijs Douze, and Herv{\'e} J{\'e}gou.
\newblock Billion-scale similarity search with gpus.
\newblock {\em IEEE Transactions on Big Data}, 7(3):535--547, 2019.

\bibitem{johnson1984extensions}
William~B Johnson and Joram Lindenstrauss.
\newblock Extensions of lipschitz mappings into a hilbert space 26.
\newblock {\em Contemporary mathematics}, 26, 1984.

\bibitem{lee1980two}
Der-Tsai Lee and Bruce~J Schachter.
\newblock Two algorithms for constructing a delaunay triangulation.
\newblock {\em International Journal of Computer \& Information Sciences},
  9(3):219--242, 1980.

\bibitem{li2020improving}
Conglong Li, Minjia Zhang, David~G Andersen, and Yuxiong He.
\newblock Improving approximate nearest neighbor search through learned
  adaptive early termination.
\newblock In {\em SIGMOD}, pages 2539--2554, 2020.

\bibitem{li2019random}
Xiaoyun Li and Ping Li.
\newblock Random projections with asymmetric quantization.
\newblock {\em NeurIPS}, 32, 2019.

\bibitem{malkov2018efficient}
Yu~A Malkov and Dmitry~A Yashunin.
\newblock Efficient and robust approximate nearest neighbor search using
  hierarchical navigable small world graphs.
\newblock {\em IEEE TPAMI}, 42(4):824--836, 2018.

\bibitem{marcheret2009optimal}
Etienne Marcheret, Vaibhava Goel, and Peder~A Olsen.
\newblock Optimal quantization and bit allocation for compressing large
  discriminative feature space transforms.
\newblock In {\em 2009 IEEE Workshop on ASRU}, pages 64--69. IEEE, 2009.

\bibitem{martinez2018lsq++}
Julieta Martinez, Shobhit Zakhmi, Holger~H Hoos, and James~J Little.
\newblock Lsq++: Lower running time and higher recall in multi-codebook
  quantization.
\newblock In {\em ECCV}, 2018.

\bibitem{matsui2018survey}
Yusuke Matsui, Yusuke Uchida, Herv{\'e} J{\'e}gou, and Shin'ichi Satoh.
\newblock A survey of product quantization.
\newblock {\em ITE Transactions on Media Technology and Applications},
  6(1):2--10, 2018.

\bibitem{morozov2019unsupervised}
Stanislav Morozov and Artem Babenko.
\newblock Unsupervised neural quantization for compressed-domain similarity
  search.
\newblock In {\em ICCV}, pages 3036--3045, 2019.

\bibitem{plotz2018neural}
Tobias Pl{\"o}tz and Stefan Roth.
\newblock Neural nearest neighbors networks.
\newblock {\em arXiv preprint arXiv:1810.12575}, 2018.

\bibitem{silpa2008optimised}
Chanop Silpa-Anan and Richard Hartley.
\newblock Optimised kd-trees for fast image descriptor matching.
\newblock In {\em CVPR}, pages 1--8. IEEE, 2008.

\bibitem{sugawara2016approximately}
Kohei Sugawara, Hayato Kobayashi, and Masajiro Iwasaki.
\newblock On approximately searching for similar word embeddings.
\newblock In {\em ACL}, 2016.

\bibitem{wang2020note}
Hongya Wang, Zhizheng Wang, Wei Wang, Yingyuan Xiao, Zeng Zhao, and Kaixiang
  Yang.
\newblock A note on graph-based nearest neighbor search.
\newblock {\em arXiv preprint arXiv:2012.11083}, 2020.

\bibitem{wang2010sequential}
Jun Wang, Sanjiv Kumar, and Shih-Fu Chang.
\newblock Sequential projection learning for hashing with compact codes.
\newblock 2010.

\bibitem{wu2017multiscale}
Xiang Wu, Ruiqi Guo, Ananda~Theertha Suresh, Sanjiv Kumar, Daniel~N
  Holtmann-Rice, David Simcha, and Felix Yu.
\newblock Multiscale quantization for fast similarity search.
\newblock {\em NeurIPS}, 30:5745--5755, 2017.

\bibitem{yu2020pecos}
Hsiang-Fu Yu, Kai Zhong, and Inderjit~S Dhillon.
\newblock Pecos: Prediction for enormous and correlated output spaces.
\newblock {\em arXiv preprint arXiv:2010.05878}, 2020.

\bibitem{zhang2019deep}
Shuai Zhang, Lina Yao, Aixin Sun, and Yi~Tay.
\newblock Deep learning based recommender system: A survey and new
  perspectives.
\newblock {\em ACM Computing Surveys (CSUR)}, 52(1):1--38, 2019.

\end{thebibliography}

%%%%%%%%%%%%%%%%%%%%%%%%%%%%%%%%%%%%%%%%%%%%%%%%%%%%%%%%%%%%
\section*{Checklist}

%%% BEGIN INSTRUCTIONS %%%
The checklist follows the references.  Please
read the checklist guidelines carefully for information on how to answer these
questions.  For each question, change the default \answerTODO{} to \answerYes{},
\answerNo{}, or \answerNA{}.  You are strongly encouraged to include a {\bf
justification to your answer}, either by referencing the appropriate section of
your paper or providing a brief inline description.  For example:
\begin{itemize}
  \item Did you include the license to the code and datasets? \answerYes{See Section~4.1.}
  \item Did you include the license to the code and datasets? \answerNo{The code and the data are proprietary.}
  \item Did you include the license to the code and datasets? \answerNA{}
\end{itemize}
Please do not modify the questions and only use the provided macros for your
answers.  Note that the Checklist section does not count towards the page
limit.  In your paper, please delete this instructions block and only keep the
Checklist section heading above along with the questions/answers below.
%%% END INSTRUCTIONS %%%

\begin{enumerate}

\item For all authors...
\begin{enumerate}
  \item Do the main claims made in the abstract and introduction accurately reflect the paper's contributions and scope?
    \answerYes{}
  \item Did you describe the limitations of your work?
    \answerYes{}
  \item Did you discuss any potential negative societal impacts of your work?
    \answerYes{}
  \item Have you read the ethics review guidelines and ensured that your paper conforms to them?
    \answerYes{}
\end{enumerate}

\item If you are including theoretical results...
\begin{enumerate}
  \item Did you state the full set of assumptions of all theoretical results?
    \answerNA{}
        \item Did you include complete proofs of all theoretical results?
    \answerNA{}
\end{enumerate}

\item If you ran experiments...
\begin{enumerate}
  \item Did you include the code, data, and instructions needed to reproduce the main experimental results (either in the supplemental material or as a URL)?
    \answerYes{}
  \item Did you specify all the training details (e.g., data splits, hyperparameters, how they were chosen)?
    \answerYes{}
        \item Did you report error bars (e.g., with respect to the random seed after running experiments multiple times)?
    \answerNo{}, the benchmark is using best result across different methods, so I followed the previous work.
        \item Did you include the total amount of compute and the type of resources used (e.g., type of GPUs, internal cluster, or cloud provider)?
    \answerYes{}
\end{enumerate}

\item If you are using existing assets (e.g., code, data, models) or curating/releasing new assets...
\begin{enumerate}
  \item If your work uses existing assets, did you cite the creators?
    \answerYes{}
  \item Did you mention the license of the assets?
    \answerNo{}, but it's included in the benchmark link I included.
  \item Did you include any new assets either in the supplemental material or as a URL?
    \answerNo{}
  \item Did you discuss whether and how consent was obtained from people whose data you're using/curating?
    \answerYes{}
  \item Did you discuss whether the data you are using/curating contains personally identifiable information or offensive content?
    \answerNo{}, This work is using existing dataset without touching these issues.
\end{enumerate}

\item If you used crowdsourcing or conducted research with human subjects...
\begin{enumerate}
  \item Did you include the full text of instructions given to participants and screenshots, if applicable?
    \answerNA{}{}
  \item Did you describe any potential participant risks, with links to Institutional Review Board (IRB) approvals, if applicable?
    \answerNA{}
  \item Did you include the estimated hourly wage paid to participants and the total amount spent on participant compensation?
    \answerNA{}
\end{enumerate}

\end{enumerate}

%%%%%%%%%%%%%%%%%%%%%%%%%%%%%%%%%%%%%%%%%%%%%%%%%%%%%%%%%%%%

\clearpage
\appendix
\center{\Large Supplementary }
\section{Formulation of Inner-product}
In the main text, we presented derivation of $L2$ distance, and in this section we will derive the approximation for inner-product distance measure. Notice that angle measure can be obtained by firstly normalizing data vectors and then apply inner-product distance and thus the derivation is the same. For a query $q$ and data point $d$, inner-product distance measure is $Dist = q^Td$. Similar to $L2$ distance, we can apply the same decomposition to write $q = q_{proj} + q_{res}$ and $d = d_{proj} + d_{res}$. substituting the decomposition into distance definition, we have
\begin{align*}
    Dist = q_{proj}^Td_{proj} + q_{res}^Td_{res}.
\end{align*}
As in $L2$ case $ q_{proj}$ and $d_{proj}$ can be obtained by simple operations and the remaining uncertainy term is again $q_{res}^Td_{res}$. Therefore, in inner-product case, angle between neighboring residual vectors is still the target to approximate.

\section{Proof of proposition \ref{ref:proposition1}}

\begin{proof}
We can firstly construct all possible pairs of $\frac{N(N-1)}{2}$ combinations of sample of vectors $x,y$ from $D_{res}$ and compile all $\frac{N(N-1)}{2}$ pairs into two matrices $X$ and $Y$. With this notation, we can rewrite the original optimization into matrix form:
\begin{align*}
   &\argmin_{P \in \mathbb{R}^{r \times m}}   \mathbb{E}_{x,y \sim D_{res}}  \|\| Px - Py \|_2^2 - \|x - y \|_2^2  \|_2  \\
   &=      \argmin_{P \in \mathbb{R}^{r \times m}}  \| \| PX - PY \|_F^{2}  - \|X - Y\|_{F}^2 \|_2^2 \\
     &=      \argmin_{P \in \mathbb{R}^{r \times m}}  ( \| PX - PY \|_F^{2}  - \|X - Y\|_{F}^2 )^2,
\end{align*}
where $\|\cdot\|_F$ denotes matrix frobenius norm. By introducing the matrix notation, we then explicitly write out  the overall objective function without the sampling. We can further denote $Z = X - Y$. $Z$ matrix then denotes all possible pairs of vector difference from our original distribution. The objective function can then further be written into:
\begin{align*}
      &\argmin_{P \in \mathbb{R}^{r \times m}}  ( \| PX - PY \|_F^{2}  - \|X - Y\|_{F}^2 )^2 \\
      &= \argmin_{P \in \mathbb{R}^{r \times m}}  ( \| PZ \|_F^{2}  - \|Z\|_{F}^2 )^2 \\
    % &= \argmin_{P \in \mathbb{R}^{k \times r}}  ( \| (PZ - Z) + Z \|_F^{2}  - \|Z\|_{F}^2 )^2 \\
      &= \argmin_{P \in \mathbb{R}^{r \times m}}  ( \| PU_{z}S_{z}V_{z}^T \|_F^{2}  - \|U_{z}S_{z}V_{z}^T\|_{F}^2 )^2,
\end{align*}
where $U_{z}S_{z}V_{z}^T$ denotes the SVD decomposition of $Z$. By the basic properties of SVD decomposition, we know that $\|U_{z}S_{z}V_{z}^T\|_{F}^2 = \| S_{z}\|_F^2$ as $U_{z}$ and $V_{z}$ are unitary matrices. $\| S_{z}\|_F^2$ equals sum of square of singular values of $Z$. Similarly,  $\|PU_{z}S_{z}V_{z}^T \|_F^{2} = \|PU_{z}S_{z} \|_F^{2}$. Thus it's not hard to see that the objective function is to find a projection direction which will result the minimal difference between the projected $S_{z}$ and full sum of squared eigenvalues of $S_{z}$. Thus, the optimal answer is the top $r$ directions as of columns of matrix $U_{z}$ as it will cancel out the top $r$ square of eigenvalues of $S_{z}$ which happens to be the largest ones.

The remaining thing is to show that SVD of $Z$ is essentially the same as SVD of $D_{res}$. Notice that both $X$ and $Y$ are just duplicating and re-ordering of $D_{res}$. So both $X$,$Y$ share the same basis of $D_{res}$. Denote SVD results of $D_{res} = USV^T$. We can then represent $X = USV_{x}^T$ and $Y = USV_{y}^T$. Consequently, we can also represent the SVD of $Z$ as  $Z = X - Y = USV_{x}^T - USV_{y}^T = US(V_{x}- V_{y})^T$ so we can see that it shares the same basis as $D_{res}$ and the proof is complete.

\end{proof}

\section{Approximate Greedy Search Algorithm}

\begin{algorithm}[H]
   \caption{Approximate Greedy Graph Search}
   \label{alg:appx_greedy}
   \KwIn{graph $G$, query $q$, starting point $p$, distance function dist(), appxoaimate distance function appx(), number of nearest points to return $efs$}
   \KwOut{top candidate set $T$}
   candidate set $C = \{$p$\}$ 
   
   dynamic list of currently best candidates $T = \{$p$\}$ 
   
   visited $V = \{$p$\}$

\While{$C$ is not empty}{ 
 cur $\leftarrow$ nearest element from $C$ to
%      $q$ (i.e., current nearest point to expand) 

E ub $\leftarrow$ distance of the furthest element from $T$ to $q$ (i.e., upper bound of the candidate search)

    \If{dist(cur, q) $>$ ub}{return T}
    
    \For{point $n$ $\in$ neighbour of cur in $G$}{
       \If{$n$ $\in$ V}{continue}
       
       V.add($n$)
       
       \uIf{ $\#$updates of cur > 5 times}{ e = appx(n, q)}\Else{ e = dist(n, q)}
       
       \If{ e $\le$ ub or $|T|$ $\le$ $efs$} {
       
       update distance to be dist(n,q)
       
        C.add(n)
        
        T.add(n)
        
        \If{$|T|$ $>$ $efs$}{remove furthest point to $q$ from T}
       
       ub $\leftarrow$ distance of the furthest element from $T$ to $q$ (i.e., update ub)
       }
    }
 }
 
 return T

\end{algorithm}

\begin{figure*}
  \centering
%   \begin{tabular}{ccc}
%     %   \subfigure[]{
%     \includegraphics[width=0.3\linewidth]{}
%     % }
%     &
%     % \subfigure[]{
%     \includegraphics[width=0.3\linewidth]{}
%     % }
%     &
%     % \subfigure[]{
%     \includegraphics[width=0.3\linewidth]{}
%     % } 
%     \\
%     %   \subfigure[]{
%     \includegraphics[width=0.3\linewidth]{}
%     % }
%     &
%     % \subfigure[]{
%     \includegraphics[width=0.3\linewidth]{}
%     % }
%     &
%     % \subfigure[]{
%     \includegraphics[width=0.3\linewidth]{}
%     % }   
%   \end{tabular}
    \includegraphics[width=.9\linewidth]{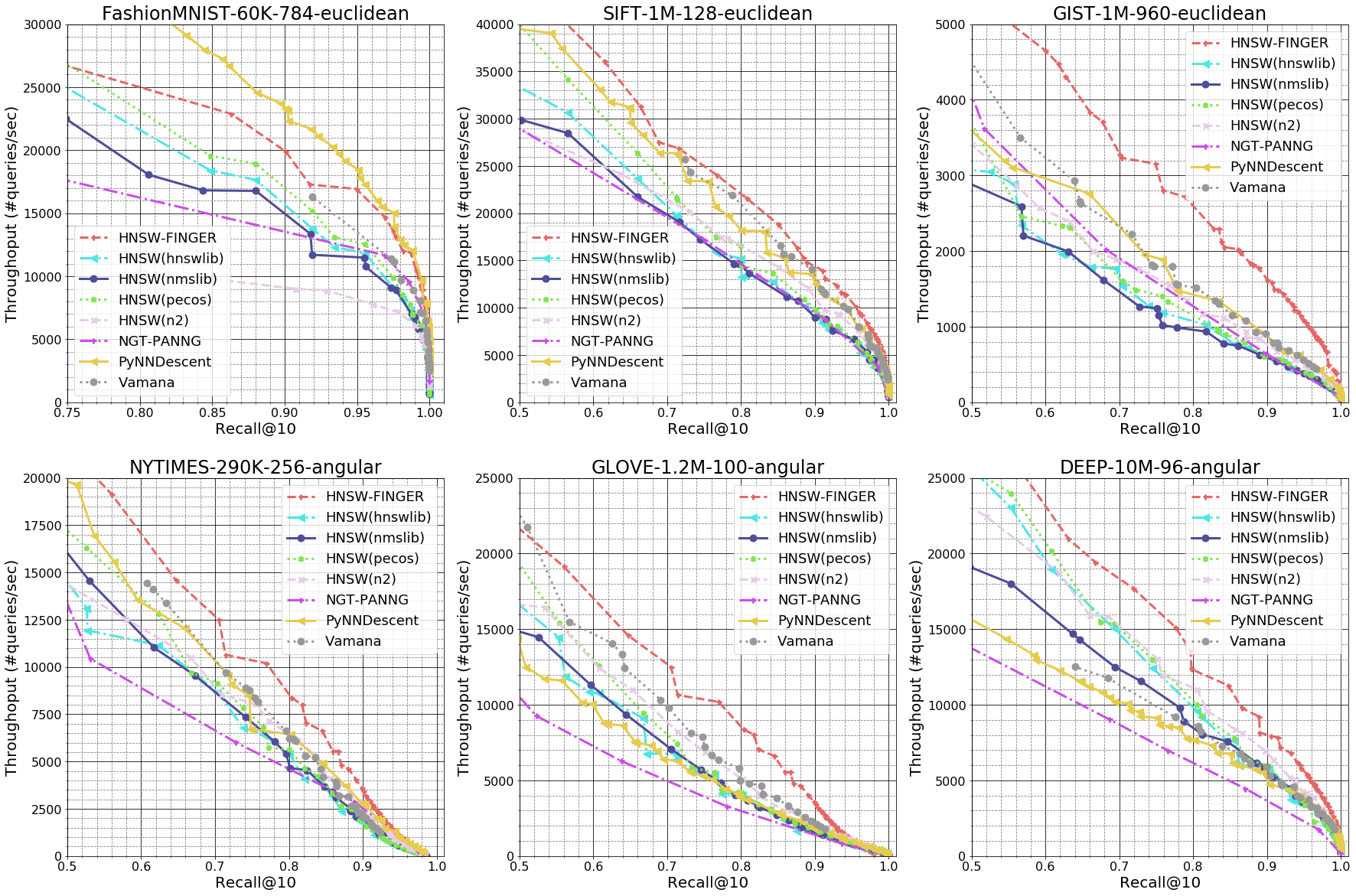}
  
  \caption{ Experimental results of graph-based methods. Throughput versus Recall@10 chart is plotted for all datasets. Top row presents datasets with $L2$ distance measure and bottom row presents datasets with angular distance measure. We can observe a significant performance gain of HNSW-\ourmethod{} over existing graph-based methods. }
  \label{fig:total}
\end{figure*}

\section{Complete Comparison of Graph-based Methods}
Complete results of all graph-based methods are shown in Figure \ref{fig:total}. HNSW-\ourmethod{} basically outperforms all existing graph-based methods except on FashionMNIST-60K-784 where PyNNDescent performs extremely well. In principle, \ourmethod{} could also be applied on PyNNDescent to further improve the result. Results show that currently no graph-based methods completely exploits the training data distribution. This reflects the importance of the inference acceleration methods as \ourmethod{} that can create consistently faster inference on all underlying search graph. Making a search graph maximally suitable for applying \ourmethod{} is also an interesting future direction.

\section{Selection of Rank Parameter $r$}

Under ANN-benchmark protocol, we could have made the selection of rank $r$ in \ourmethod{} as a hyper-parameter to search in order to achieve best performance. But this might be time-consuming for real applications. Instead, here we provide a practical rule of thumb for choosing $r$ by calculating the correlation coefficient of $X,Y$ in Algorithm \ref{alg:build}. $X$ stores true angles between neighboring pairs and $Y$ stores approximated angles. We start $r$ to be 8 in order to maximally leverage SIMD. Specifically, AVX2 SIMD allows a single instruction with 8 parallel floating point computation. Increase the rank in a multiple of 8 will maximally leverage the capability of SIMD instructions. Now, if the correlation is smaller than 0.7, we enlarge $r$ by 8 and redo Algorithm \ref{alg:build} again with increased $r$ until correlation between $X$ and $y$ is larger than 0.7. In this work, to show the effectiveness of applying \ourmethod{} in read world applications, we use this search scheme and ranks learned in FashionMNIST-60K-784: 16, SIFT-1M-128: 16, GIST-1M-960: 16, NYTIMES-290K-256: 48, GLOVE-1.2M-100: 32 and DEEP-10M-96: 24.

\section{Pre-processing Time and Memory footprint of HNSW-\ourmethod{} and HNSW}
Examples of pre-processing time and memory footprint of HNSW-\ourmethod{} and HNSW is shown in Table \ref{tab:memory}. \ourmethod{} requires additional linear scan of training data, so it will add some additional processing time to the base method. The difference is around 90 seconds which is not significant compared to the pre-processing time of base HNSW method. Memory usage of HNSW is approximately memory of data plus number of edges $|E|$ $\times$ sizeof(int). For a selected low-rank dimension $r$, \ourmethod{} requires additional ($r$ + 2) $\times$ |E| $\times$ sizeof(float) to store the pre-computed values.
\begin{table}
  \centering
   \vspace{-1mm}
    \caption{Construction statistics of HNSW-\ourmethod{} and HNSW. Pre-processing time in second is shown in the table. Numbers in parentheses represent the memory footprint in GB.}
\label{tab:memory}
  
\resizebox{7cm}{!}{
  \begin{tabular}{|c|c|c|c|}
    \hline
    Dataset & M & HNSW-\ourmethod{} & HNSW(PECOS) \\%& Overall Time\\
    \hline 
    \multirow{2}{*}{SIFT-1M-128} & 12 &   291.5s (2.8G) &  215.4s (1G) \\%&430.51 \\
  \cline{2-4}
    & 48 & 521.4s (9.2G)  &  433.9s (2.4G) \\%&430.51 \\
    \hline
  \multirow{2}{*}{ GLOVE-1.2M-100}& 12 &   386.2s(4.8G) & 300.1s (1.1G) \\%&430.51
   \cline{2-4}
    &  48 & 1409.8s (18G) & 1317.3s (2.7G) \\%&430.51 \\
    \hline
  \end{tabular}
  }
\end{table}

\section{Detailed computation of Approximate Distance.}
As mentioned in the main text, we explicitly write out the projection matrix $P$ and center node $c$ in Algorithm \ref{alg:appx_dist} to make it easier to understand the full approximation workflow. In practice, the projected residual vector $Pd_{res}$ can be pre-computed and stored in the search index to save inference time. $Pq_{res}$ can also be easily calculated by \begin{align*}
    Pq_{res} &= P(q - q_{proj}) 
    = Pq - Pq_{proj} 
    = Pq - \frac{c^Tq}{c^Tc}Pc.
\end{align*}
$Pq$ needs only be calculated once for whole graph search so the cost is limited when the rank $r$ is not large. As we point out in Section \ref{sec:observation1}, $c^Tq$ must already be calculated when we explore neighbours of $c$. $c^Tc$ and $Pc$ can again be pre-computed and stored. Thus, the cost of $Pq_{res}$ is limited to a low-dimensional vector subtraction. To apply \ourmethod{}, we only need to slightly modify  Algorithm \ref{alg:greedy}. Specifically, we replace distance function dist($n$,$q$) in line \ref{line:lbcriteria} of Algorithm \ref{alg:greedy} with the approximation (i.e., Algorithm \ref{alg:appx_dist}). If the approximated distance is larger than the upper-bound variable, the search continues. Otherwise, we calculate precise distance between $q$ and $d$ and update the candidate queue correspondingly. This will make sure that all distance information in candidates set $C$ is correct so the algorithm won't terminate too early. The complete modified algorithm is listed in the Supplementary C, and the selection of rank $r$ is discussed in Supplementary E.

\end{document}